\DeclareMathOperator{\supp}{supp}
\renewcommand{\parallel}{\mathbin{\!/\mkern-2mu/\!}}
\begin{document}

\title{Blessings and Curses of Covariate Shifts: Adversarial Learning Dynamics, Directional Convergence, and Equilibria}

\author{\name Tengyuan Liang \email tengyuan.liang@chicagobooth.edu \\
       \addr 
	   Booth School of Business\\
	   University of Chicago\\
       Chicago, IL 60637, USA
       }

\maketitle

\begin{abstract}%   <- trailing '%' for backward compatibility of .sty file
Covariate distribution shifts and adversarial perturbations present robustness challenges to the conventional statistical learning framework: mild shifts in the test covariate distribution can significantly affect the performance of the statistical model learned based on the training distribution. The model performance typically deteriorates when extrapolation happens: namely, covariates shift to a region where the training distribution is scarce, and naturally, the learned model has little information. For robustness and regularization considerations, adversarial perturbation techniques are proposed as a remedy; however, careful study needs to be carried out about what extrapolation region adversarial covariate shift will focus on, given a learned model. This paper precisely characterizes the extrapolation region, examining both regression and classification in an infinite-dimensional setting. We study the implications of adversarial covariate shifts to subsequent learning of the equilibrium---the Bayes optimal model---in a sequential game framework. We exploit the dynamics of the adversarial learning game and reveal the curious effects of the covariate shift to equilibrium learning and experimental design. In particular, we establish two directional convergence results that exhibit distinctive phenomena: (1) a blessing in regression, the adversarial covariate shifts in an exponential rate to an optimal experimental design for rapid subsequent learning; (2) a curse in classification, the adversarial covariate shifts in a subquadratic rate to the hardest experimental design trapping subsequent learning.
\end{abstract}

\begin{keywords}
  covariate distribution shift, adversarial learning, experimental design, directional convergence, dynamics, equilibria.
\end{keywords}

\section{Introduction}
In supervised learning, a folklore rule is that the test data set should follow the same,  or at least resemble the probability distribution from which the training data set is drawn, for strong guarantees of learnability and generalization \citep{vapnik1999nature}. The reason is grounded since, if not, either (a) concept shift, namely the conditional distribution of $P(Y|X)$ changes, hence the underlying Bayes optimal prediction model $f^\star: X\rightarrow Y$ could shift, or (b) covariate shift, namely the covariate distribution $\mu \in \cP(X)$ shifts so that the underlying evaluation metric for the learned model $f$ changes.\footnote{One typical evaluation metric is $\| f - f^\star\|_{L^2(\mu)}$, where the underlying covariate distribution $\mu \in \cP(X)$ varies.} Nevertheless, supervised learning is often deployed ``in the wild,'' meaning the test data distribution typically extrapolates the training data distribution. 

Concept shift is inherently a complex problem, as if shooting for a moving target. However, covariate shift may be less severe a problem if the concept stays the same. Historically, specific statistical methods allow for mild extrapolation,\footnote{Here we mean the region of the extrapolation is still contained in the region of the seen data, but the test distribution can differ from the training distribution.} say the (fixed-design) linear regression and the (local) nonparametric regression \citep{stone1980optimal}. Recently, a few notable lines of work have arisen to study the covariate shift. \citet{shimodaira2000improving}, \citet{sugiyama2007covariate}, and \citet{sugiyama2005generalization} studied covariate shift adaption assuming knowledge of the density ratio of the covariate shift. \cite{ben2006analysis} and \cite{blitzer2007learning} initiated the learning-theoretic study of domain adaptation. There, the generalization error on the target/test domain is bounded by the standard generalization error on the source/training domain, plus a discrepancy term---for instance, total variation or its tighter analog induced by the hypothesis class---quantifying the covariate shift between training and test distributions. Later, on the one hand, a collection of research extended the theory to allow for concept shifts with general loss functions by proposing other notions of discrepancy measures or quantities contrasting two distributions \citep{mansour2009domain, ben2010theory, mohri2012new, kpotufe2018marginal}. On the other hand, by establishing lower bounds, \cite{david2010impossibility} studied assumptions on the relationship between training and test distributions necessary for successful domain adaptation. The learning-theoretic framework brought forth new domain adaptation algorithms, for instance, reweighing the empirical distribution to minimize the discrepancy between source and target \citep{mansour2009domain}, 
and finding common representation space with small discrepancy while maintaining good performance on the training data \citep{ben2006analysis,ganin2016domain}. A considerable body of domain adaptation literature primarily focuses on when the conditional relationship $P(Y|X)$ is invariant, and thus the Bayes optimal model stays fixed, yet the covariate distribution $P(X)$ shifts. We follow this tradition of an invariant Bayes optimal model and investigate adversarial perturbations to the covariate distribution.

The quest for robust domain adaptation also prompted the recent development in adversarial learning \citep{ganin2016domain, goodfellow2014explaining, ilyas2019adversarial, bubeck2021single}. Akin to covariate shift, adversarial perturbations have recently revived interest in machine learning and robust optimization communities \citep{goodfellow2014explaining, delage2010distributionally}. Adversarial perturbation is motivated by the following observation: small local perturbations to the covariate distribution can significantly compromise the supervised learning performance \citep{goodfellow2014explaining, madry2017towards, javanmard2022PreciseStatistical}. For example, given a supervised learning model $f$, adversarial perturbations shift the covariate distribution $\mu \in \cP(X)$ locally under a specific metric on the measure space $\cP(X)$, such that it makes the model suffer the most in predictive performance. The familiar reader will immediately identify the minimax game between the supervised learning model $f$ and the covariate distribution $\mu$: the model $f$ minimizes the risk from a model class, yet the covariate distribution $\mu$ maximizes the risk from a probability distribution class. Such a game perspective between the learning model $f$ and data distribution $\mu$ has been influential since the seminal work of boosting \citep{freund1997decision,freund1999adaptive}. Inspired by the above, we study covariate shift from a game-theoretic perspective. The connection between boosting and adversarial perturbation will be elaborated later; in a nutshell, instead of taking Kullback-Leibler divergence as a metric, we use the Wasserstein metric for adversarial perturbation, thus allowing for extrapolation outside the current covariate support.

This paper studies a particular form of covariate shifts following adversarial perturbations, with the underlying concept (i.e., Bayes optimal model) held fixed. We take a game-theoretic view to examine covariate shifts and discover curious insights. We study both regression and classification in an infinite-dimensional setting. As hinted, we will exploit the dynamics of the adversarial learning game and reveal the curious effects of the covariate shift to subsequent learning and experimental design. The models we study are discriminative in nature rather than generative\footnote{Here discriminative refers to modeling $Y|X$, and generative refers to modeling $X|Y$.} for invariance considerations: for the latter, the underlying concept $P(Y|X)$ could shift as a result of adversarial perturbations on covariates.

Now we are ready to state the main goal of this paper:
\begin{quote}
	\it
	 Adversarial covariate shifts move the current covariate domain to an extrapolation region. We precisely characterize the extrapolation region and, subsequently, the implications of adversarial covariate shifts to subsequent learning of the equilibrium, the Bayes optimal model. 
\end{quote}
Curiously, we show two directional convergence results that exhibit distinctive phenomena: (1) a blessing in regression, the adversarial covariate shifts in an exponential rate to an optimal experimental design for rapid subsequent learning, (2) a curse in classification, the adversarial covariate shifts in a subquadratic rate to the hardest experimental design trapping subsequent learning. The theoretical results will be later coupled with numerical validations. The theoretical study is admittedly based on idealized models to demonstrate clean new insights and curious dichotomy on covariate shift and adversarial learning; potential future directions will be discussed in the last section. Before diving into the problem setup, we elaborate on the background and some related literature. 

\subsection{Background and Literature Review}
We first fix some notations to make the discussions on covariate shift and adversarial perturbation concrete. Let $X$ be space for the covariates, and $Y \subset \R$ be the space for a real-valued response variable. When a pair of covariate and response data $(x, y) \in X \times Y$ is generated based on the probability measure $\pi \in \cP(X \times Y)$, we denote $(x, y) \sim \pi$. Let $f: X \rightarrow Y$ be a statistical model and $\ell(\cdot, \cdot): \R \times \R \rightarrow \R$ be a risk or loss function $(f(x), y) \mapsto \ell(f(x), y)$ that quantifies how the model $f$ performs on the data pair $(x, y)$. 

Given a statistical model $f$ and a probability measure for data set $\pi$, one can define the utility function that accesses the risk of model $f$ on data $\pi$
\begin{align*}
	\cR(f, \pi) := \E_{(x, y) \sim \pi}\big[\ell(f(x), y)\big] \;.
\end{align*}

\paragraph{Models, covariate distributions, and Bayes optimality}
Following the literature \citep{shimodaira2000improving, quinonero2008dataset}, we consider the covariate shift but not the concept shift. For a valid marginal probability measure for the covariate $\mu \in \cP(X)$, we define the induced joint measure for the covariate and response pair 
\begin{align}
	\label{eqn:disintegration}
	\pi_{\mu}:= \int \delta_x \otimes \pi_x^\star \dd{\mu(x)} \in \cP(X \times Y)\;, 
\end{align}
where $\pi_x^\star \in \cP(Y)$ denotes a fixed conditional data generating process for $\by|\bx = x$ that does not vary with $\mu$, and $\delta_{x}$ denotes the delta measure at point $x$. Equation~\eqref{eqn:disintegration} should be read as disintegration of measure \citep{villani2021topics}, meaning for all bounded continuous function $h \in C_{b}(X \times Y)$
\begin{align*}
	\int_{X \times Y} h(x, y) \dd{\pi_{\mu}(x,y)} = \int_X \big[ \int_Y h(x, y) \dd{\pi_x^\star(y)} \big] \dd{\mu(x)} \;.
\end{align*}

Given a fixed conditional distribution $\pi_x^\star$ and a loss function $\ell(\cdot, \cdot)$, one can thus define the Bayes optimal model (suppose for now that this map is well-defined)
\begin{align}
	\label{eqn:bayes}
	f^\star_{\mathrm{Bayes}} : x \mapsto \argmin_{y'\in Y} \int \ell(y', y) \dd{\pi^\star_x(y)} \;.
\end{align}
Observe that the Bayes optimal model does not change with $\mu$, the distribution of covariates $x$. 

Now, we can define the objective of the game between the model $f:X \rightarrow Y$ and the covariate distribution $\mu \in \cP(X)$, 
\begin{align}
	\label{eqn:model-dist-objective}
	\cU(f, \mu) := \cR(f, \pi_{\mu}) = \int_X \big[\int_Y \ell(f(x), y)\dd{\pi^\star_{x}(y)} \big] \dd{\mu(x)} \;.
\end{align}
It turns out Bayes optimal model $f^\star_{\mathrm{Bayes}}$ is an equilibrium of the game, as we shall show shortly.

Note that classical statistical learning theory studies $\cU(\widehat{f}_{\mu}, \mu)$ where $\widehat{f}_{\mu}$ is learned based on an empirical data set drawn from the same distribution $\pi_{\mu}$. However, when the covariate distribution shift to another measure $\nu$ that is different from the training data distribution $\mu$, the performance $\cU(\widehat{f}_{\mu}, \nu)$ deteriorates, see \cite{shimodaira2000improving} and \cite{quinonero2008dataset} for a review on covariate shifts. Assuming the knowledge of the density ratio $\dd{\nu}/\dd{\mu}$, importance weighting methods have been proposed as an adaptation to covariate shifts.

\paragraph{Adversarial perturbation}
The theoretical insights toward understanding adversarial perturbations have so far centered around robustness and regularization in various formulations, see \cite{xu2009robustness} (Theorem 3) for support vector machines, \cite{ross2018improving, madry2017towards, sinha2017certifying} for neural network models, and \cite{delage2010distributionally} for distributionally robust optimization. Given a metric measure space $(\cP(X), d)$, a covariate distribution $\mu \in \cP(X)$, and a current model $f$, consider the following population version of the adversarial perturbation, 
\begin{align}
	\label{eqn:adversarial-perturbation}
	\cU_{\gamma}(f, \mu) := \max_{\nu~:~ d^2(\nu, \mu) \leq \gamma}~ \cU(f, \nu) \;, 
\end{align}
where $\cU(\cdot, \cdot)$ is defined in \eqref{eqn:model-dist-objective}. 

Adversarial perturbation can be viewed as smoothly regularizing the original loss function, thus enforcing stability. To see this, consider the Wasserstein metric $W_2$; as done in \cite{sinha2017certifying} (Proposition 1), one can write the Lagrangian of \eqref{eqn:adversarial-perturbation} and characterize the coupling analytically 
{\small
\begin{align*}
	\max_{\lambda \geq 0} \min_{\nu \in \cP(X)}~ -\cU(f, \nu) + \lambda \left[ W_2^2(\nu, \mu) - \gamma \right] = \max_{\lambda \geq 0}~ \E_{x \sim \mu}\big[ \underbrace{ \min_{x' \in X}~ \big( -\cU(f, \delta_{x'}) + \frac{\lambda}{2}\|x'- x \|^2 \big)}_{\text{Moreau--Yosida regularization}} \big] - \gamma \lambda \;,
\end{align*}
}where $\delta_{x}$ denotes the delta measure at point $x$.
Both the robustness and regularization perspectives readily unveil, as the above is the Moreau--Yosida envelope of the function $-\cU(f, \delta_{x}): X \rightarrow \R$ with parameter $\lambda^{-1}$, thus serving as a smoothed regularization to the original loss. We refer the readers to \cite{sinha2017certifying} for detailed derivations.

The adversarial perturbation provides a robust notion of covariate shifts without requiring the explicit knowledge of density ratio. Perhaps more importantly, it extends to the extrapolation case when the support of $\nu$ differs from $\mu$. The literature on adversarial learning is growing too fast to give a complete review. To name a few: \cite{bubeck2021single,bartlett2021adversarial} studied adversarial examples using gradient steps for two/multi-layer ReLU networks with Gaussian weights; for regression, \cite{javanmard2020precise} studied precise tradeoffs between adversarial risk $\cU_{\gamma}(\widehat{f}, \mu)$ and standard risk $\cU(\widehat{f}, \mu)$ for a range of models $\widehat{f}$ interpolating between empirical risk minimization and adversarial risk minimization,  \cite{xing2021adversarially} studied properties of the adversarially robust estimate; for classification, \cite{bao2020calibrated} introduced surrogate losses that are calibrated with the adversarial 0-1 loss, \cite{hu2018does} identified certain failure mode of distributionally robust classification under $f$-divergences, \cite{javanmard2022PreciseStatistical} precisely characterized the adversarial 0-1 loss with Gaussian covariate distributions. 

\paragraph{Wasserstein gradient flow}
Adversarial distribution shift is inherently connected to Wasserstein gradient flow. Given a current model $f$, the covariate distribution is perturbed incrementally within a Wasserstein ball in an adversarial way, with a stepsize $\gamma \in \R_+$, 
\begin{align}
	\label{eqn:perturb-wass}
	\nu := \argmin_{\nu \in \cP(X)} ~ -\cU(f, \nu) + \frac{1}{\gamma} W_2^2(\nu, \mu) \;.
\end{align}
Denote the distribution shift map $\mathrm{Ds}_{\gamma}: x \mapsto \argmin_{x' \in X} \big( -\cU(f, \delta_{x'}) + \frac{1}{2\gamma}\|x'- x \|^2 \big)$ defined by the Moreau--Yosida envelope. Informally, such a map defines the worst-case covariate shift for the model $f$ evaluated at measure $\mu$, as the maximizer of the adversarial perturbation is attained at $$\nu = (\mathrm{Ds}_{\lambda^{-1}})_\# \mu$$ where $\lambda>0$ is the solution to the dual. In the infinitesimal limit $\gamma \asymp \lambda^{-1} \rightarrow 0$, one can show that \citep{ambrosio2005gradient, guoOnlineLearningTransport2022}
\begin{align}
	\frac{\big( \mathrm{Ds}_{\gamma} - \mathrm{Id} \big) [x]}{\gamma} \rightarrow \pdv{x}  \cU(f, \delta_{x}) \;.
\end{align}
The distribution shift map $\mathrm{Ds}_{\gamma}$ presents a way of constructing adversarial examples \citep{goodfellow2014explaining, ilyas2019adversarial, bubeck2021single}, namely a couple $x \approx x'$ such that $\cU(f, \delta_{x'})-\cU(f, \delta_{x})$ is large.

The continuous-time analog of the adversarial perturbation \eqref{eqn:perturb-wass} is called the Wasserstein gradient flow as $\gamma \rightarrow 0$, where the density $\rho_t$ (associated with $\nu_t$), w.r.t. the Lebesgue measure, evolves according to the following PDE \citep{ambrosio2005gradient}
\begin{align}
	\partial_t \rho_t + \nabla \cdot( \rho_t \mathrm{V}) = 0 \;, ~\text{where}~\mathrm{V}: x \mapsto \pdv{x} \cU(f, \delta_x) \;. 
\end{align}

\subsection{Problem Setup}
\label{sec:problem-setup}

This paper considers regression and classification problems in an infinite-dimensional setting.
Let $X \subset \R^{\mathbb{N}}$ be a subset of a possibly infinite-dimensional space for the covariates, and $Y \subset \R$ be the space for a real-valued response variable. 
We concern an infinite-dimensional linear model class $\cF: =\{f_{\theta} ~|~ f_{\theta}(x) := \langle x, \theta \rangle, \theta \in \ell^2_{\mathbb{N}} \}$ where the inner-product corresponds to the Hilbert space $\ell^2_{\mathbb{N}}$. Slightly abusing the notation, we write for convenience the utility function
\begin{align}
	\label{eqn:utility}
	\cU(\theta, \mu) = \E_{(x, y) \sim \pi_{\mu}}\big[\ell(f_{\theta}(x), y)\big] = \int_X \big[\int_Y \ell(f_\theta(x), y)\dd{\pi^\star_{x}(y)} \big] \dd{\mu(x)} \;.
\end{align}

We investigate two types of conditional relationships for $\pi^\star_x$ in \eqref{eqn:disintegration}, namely for some $\theta^\star \in \ell^2_{\mathbb{N}}$:
\begin{align*}
	&\text{Regression:}~ \by|\bx = x \sim \mathrm{Gaussian}\big(\langle x, \theta^\star\rangle, 1 \big), ~\ell(f, y) = (f- y)^2 \; ; \\
	&\text{Classification:}~ \by|\bx = x \sim \mathrm{Bernoulli}\big( \sigma(\langle x, \theta^\star\rangle) \big), ~\ell(f, y) = -f y + \log(1+e^{f}) \;.
\end{align*}
Here $\sigma(z) = 1/(1+e^{-z})$ is the sigmoid function. Note that in the classification setup, $y \in \{0, 1\}$.
In both the regression and classification settings we study, the Bayes optimal model \eqref{eqn:bayes} is uniquely defined
\begin{align*}
	f^\star_{\mathrm{Bayes}}(x)  = \langle x, \theta^\star \rangle \;.
\end{align*}

\paragraph{Game and equilibria} 
In the game between the model $\theta \in \ell^2_{\mathbb{N}}$ and the covariate distribution $\mu \in \cP(X)$, 
\begin{align}
	\min_{\theta } \max_{\mu }~ \cU(\theta, \mu) & \geq  \max_{\mu} \min_{\theta}~  \cU(\theta, \mu) \geq \max_{\mu }~ \int_X \big[\min_{\theta \in \ell^2_{\mathbb{N}}} \int_Y \ell(f_\theta(x), y)\dd{\pi^\star_{x}(y)} \big] \dd{\mu(x)} \label{eqn:minimax}\\
	&=  \max_{\mu } ~  \cU(\theta^\star, \mu) \geq \min_{\theta} \max_{\mu}~ \cU(\theta, \mu) \;. \nonumber
\end{align}
Therefore, the min-max theorem holds in this infinite-dimensional context when the Bayes optimal model $f^\star_{\mathrm{Bayes}}(x)  = \langle x, \theta^\star \rangle$ is well-defined. A Nash equilibrium of $\cU(\cdot, \cdot)$ is precisely the Bayes optimal model $f^\star_{\mathrm{Bayes}}$. In plain language, the covariate distribution shift does not affect the notion of equilibrium of the game.

The game perspective is not new. For example, the celebrated boosting literature \citep{freund1997decision,freund1999adaptive, telgarsky2013margins, liang2022PreciseHighdimensional} is precisely harnessing the duality between a linear predictive model (aggregating $p$ weak learners) indexed by $\theta \in \R^{p}$ and a finitely supported data distribution (with cardinality $n$) parametrized by a weight on the probability simplex $\mu \in \Delta_n$. There, rather than adversarially perturbing data using the Wasserstein metric, the probability weight vector $\mu$---and its induced joint distribution $\pi_{\mu} = \sum_{i=1}^n \mu_i \delta_{(x_i, y_i)}$---is perturbed as in \eqref{eqn:perturb-wass} under the Kullback-Leibler divergence. A crucial difference between Wasserstein and Kullback-Leibler is that in the latter case, only the weights are allowed to vary but not the domain. Another analogy is regarding the equilibrium concept: when the data set is linearly separable, the equilibrium concept for boosting is the max-margin solution; for our problem, the equilibrium concept is the Bayes optimal solution. The game perspective is also instrumental to the generative modeling and adversarial learning literature \citep{goodfellow2020generative, dziugaite2015training, daskalakis2017training, liang2021HowWell, liang2019interaction, mokhtari2020unified}, where the duality between the probability distribution given by the generative model and discriminative function is leveraged.

\paragraph{Best response and information sets}
Given a covariate distribution $\mu^{(0)}$ whose support is on a linear subspace and does not span the full space $\supp(\mu^{(0)}) \subset X =  \ell^2_{N}$ (so that extrapolation is meaningful), the best response model $f_{\theta^{(0)}} \in \cF$ solves the following risk minimization associated with measure $\mu^{(0)}$ 
\begin{align*}
	\theta^{(0)}  \in \mathrm{BR}(\mu^{(0)}) := \argmin_{\theta\in \ell^2_{\mathbb{N}}}~ \cU(\theta, \mu^{(0)}) \;.
\end{align*}
In both the Gaussian and Bernoulli conditional models, the best response model $\theta^{(0)}$ associated with the measure $\mu^{(0)}$ takes the form
\begin{align*}
	 \theta^{(0)} \in \mathrm{BR}(\mu^{(0)}) =  \left\{ \Pi_{\supp(\mu^{(0)})} \theta^\star + \Pi_{\supp(\mu^{(0)})}^\perp \xi~|~ \forall \xi \in \ell^2_{\mathbb{N}} \right\}\;,
\end{align*}
namely, projected to the linear subspace spanned by $\supp(\mu^{(0)}) \subset X$, the perceived best response model $\theta^{(0)}$ collides the Bayes optimal model $\Pi_{\supp(\mu^{(0)})} \theta^\star$, while on the orthogonal domain $\Pi_{\supp(\mu^{(0)})}^\perp$ no information is learned. 

The minimum Hilbert space norm solution in the over-identified set $\mathrm{BR}(\mu^{(0)})$ is $ \Pi_{\supp(\mu^{(0)})} \theta^\star$. Clearly, $\theta^{(0)} = \Pi_{\supp(\mu^{(0)})} \theta^\star$ is inconsistent with the Bayes optimal model $\theta^\star$ \citep{shimodaira2000improving}. It is, therefore, natural to consider, for typical adversarial distribution shifts $\mu^{(0)} \rightarrow \mu^{(1)}$, how does the information set $\mathrm{BR}(\mu^{(1)})$ differ from $\mathrm{BR}(\mu^{(0)})$? The information set question is useful to understand whether $\theta^{(1)}$ improves upon $\theta^{(0)}$ in approaching the Bayes optimal model $\theta^\star$. To answer this, we will probe precisely how the $\supp(\mu^{(1)})$ varies from $\supp(\mu^{(0)})$ for natural adversarial covariate shifts: what extrapolation regions $\supp(\mu^{(1)})$ focus on.

\paragraph{Adversarial dynamics}
For the adversarial distribution shifts, we follow the Wasserstein gradient flow setup, given a current model $\theta^{(0)}$, the covariate distribution is perturbed incrementally within a Wasserstein ball in an adversarial way: with a stepsize $\gamma \in \R_+$, initialize $\nu_0:= \mu^{(0)}$, 
\begin{align*}
	\nu_{t+1} := \argmin_{\nu \in \cP(X)} ~ -\cU(\theta^{(0)}, \nu) + \frac{1}{\gamma} W_2^2(\nu, \nu_t) \;, ~ \text{for $t=0,1,\ldots, T$}\;,
\end{align*}
and then set $\mu^{(1)} := \nu_{T+1}$. The continuous analog of the adversarial perturbation is called the Wasserstein gradient flow as $\gamma \rightarrow 0$, where the density $\rho_t$ (associated with $\nu_t$) evolves according to the following PDE
\begin{align}
	\partial_t \rho_t + \nabla \cdot( \rho_t \mathrm{V}) = 0 \;, ~\text{where}~\mathrm{V}: x \mapsto \pdv{x} \cU(\theta^{(0)}, \delta_x) \label{eqn:wasserstein-gradient-flow}  \;.
\end{align}
Conceptually, the adversarial distribution shift is a gradient ascent flow on the Wasserstein space $(\cP(X), W_2)$ defined in \eqref{eqn:wasserstein-gradient-flow} with effective time $\gamma T$. In this paper, we study a discretization of \eqref{eqn:wasserstein-gradient-flow} with stepsize $\gamma$ and iterations $T$, as follows
\begin{align}
	x_{t+1} = x_t + \gamma \cdot \pdv{x} \cU(\theta^{(0)}, \delta_{x}) |_{x = x_t}\;, ~ \text{for $t=0,1,\ldots, T$}\;, ~\text{where}~ x_0 \sim \mu^{{(0)}} \;. \label{eqn:discrete-dynamics}
\end{align}

\section{Main Results}
\subsection{Adversarial Covariate Shifts: Blessings and Curses}
\label{sec:blessing-curse}
Let $\theta^{(0)} \in \ell^2_{\mathbb{N}}$ be the current learning model and $\theta^\star - \theta^{(0)}$ be the remaining signal to be identified. Let $\ell^2_{\mathbb{N}}(1)$ denote the unit norm ball. We now define two unit-norm directions: the blessing direction $\Delta_{\mathrm{b}} \in \R^{\mathbb{N}}$ and the curse direction $\Delta_{\mathrm{c}} \in \R^{\mathbb{N}}$
\begin{align}
	\Delta_{\mathrm{b}} &:= \frac{\theta^\star - \theta^{(0)}}{\| \theta^\star - \theta^{(0)} \|} \in \ell^2_{\mathbb{N}}(1) \;, \label{eqn:bless-direction} \\
	\Delta_{\mathrm{c}} &:= - \frac{\| \theta^{(0)}\|}{\| \theta^\star \|} \cdot \frac{\theta^\star - \theta^{(0)}}{\| \theta^\star - \theta^{(0)} \|} + \frac{\| \theta^\star - \theta^{(0)}\|}{\| \theta^\star \|}  \cdot \frac{\theta^{(0)}}{\|  \theta^{(0)} \|} \in \ell^2_{\mathbb{N}}(1) \;. \label{eqn:curse-direction}
\end{align}

The name blessing comes from the fact that $\Delta_{\mathrm{b}} \parallel \theta^\star - \theta^{(0)}$ as stated in Theorem~\ref{thm:regression-main}, namely, the direction is parallel to the remaining signal direction, the most informative signal direction given the current model $\theta^{(0)}$. 

The name curse arises as $\Delta_{\mathrm{c}} \perp \theta^\star$ under the assumption in Theorem~\ref{thm:classification-main}, that is, the direction is perpendicular to the signal direction. To see why this assumption makes sense, we recall that the best response model $\theta^{(0)}$ associated with the measure $\mu^{(0)}$ takes the form
$\theta^{(0)} \in \mathrm{BR}(\mu^{(0)}) =  \left\{ \Pi_{\supp(\mu^{(0)})} \theta^\star + \Pi_{\supp(\mu^{(0)})}^\perp \xi~|~ \forall \xi \in \ell^2_{\mathbb{N}} \right\}$. The minimum-norm solution for the best response set satisfies the assumption in Theorem~\ref{thm:classification-main}, namely $\theta^{(0)} \perp \theta^\star - \theta^{(0)}$ and therefore $\Delta_{\mathrm{c}} \perp \theta^\star$.

Curiously, we will show that the adversarial learning dynamic collapses to a probability measure along the blessing direction $\Delta_{\mathrm{b}}$ in the regression problem; in sharp contrast, the probability measure induced by the adversarial learning dynamic converges to the curse direction $\Delta_{\mathrm{c}}$ in the classification problem. The formal directional convergence results are stated in Theorems~\ref{thm:regression-main} and \ref{thm:classification-main}. For the flow of the exposition, the primary intuition of the proof is deferred to Section~\ref{sec:intuition}. Appendix~\ref{sec:proofs} collects the detailed proof of all theorems.

We first state the result for the infinite-dimensional regression problem. As a reminder, all the relevant notations in Theorems~\ref{thm:regression-main} and \ref{thm:classification-main} were introduced in Section~\ref{sec:problem-setup}.
\begin{theorem}[Regression: directional convergence]
	\label{thm:regression-main}
	Consider the regression setting where $\ell(y', y) = (y'- y)^2$ and $\by|\bx = x \sim \mathrm{Gaussian}\big(\langle x, 
	\theta^\star\rangle, 1 \big)$.
	Let $x_0 \in \supp(\mu^{(0)})$ that satisfies $\langle x_0, \theta^\star - \theta^{(0)} \rangle \neq 0$. Then the induced adversarial distribution shift dynamic \eqref{eqn:discrete-dynamics} satisfies
	\begin{align}
		\lim_{T \rightarrow \infty}~ \left| \big\langle \tfrac{x_T}{\| x_T \|}, \Delta_{\mathrm{b}} \big\rangle \right|= 1 \;, ~\text{where $\Delta_{\mathrm{b}} \parallel \theta^\star - \theta^{(0)}$ is defined in \eqref{eqn:bless-direction}}  \;.
	\end{align}
	Moreover, the directional convergence is exponential in $T$,
	\begin{align*}
		\left|\big \langle \tfrac{x_T}{\| x_T \|}, \Delta_{\mathrm{b}} \big\rangle \right| \in \left[ 1 - O\big(\tfrac{1}{e^{c T}}\big)~,~ 1 \right] \;,
	\end{align*}
	where $c = 2\log(1+2\gamma \| \theta^\star - \theta^{(0)} \|^2)$.	
\end{theorem}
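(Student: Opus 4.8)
The plan is to exploit the fact that in the regression case the map $x \mapsto \cU(\theta^{(0)}, \delta_x)$ is an \emph{exact quadratic}, so the gradient-ascent recursion \eqref{eqn:discrete-dynamics} becomes a linear recursion that can be solved in closed form. First I would compute $\cU(\theta^{(0)}, \delta_x)$: with $\ell(y',y) = (y'-y)^2$ and $\by|\bx = x \sim \mathrm{Gaussian}(\langle x, \theta^\star\rangle, 1)$, the bias--variance decomposition gives $\cU(\theta^{(0)}, \delta_x) = \langle x, \theta^{(0)} - \theta^\star\rangle^2 + 1$. Writing $u := \theta^{(0)} - \theta^\star \in \ell^2_{\mathbb{N}}$ (so $\|u\| = \|\theta^\star - \theta^{(0)}\| > 0$ and $\Delta_{\mathrm{b}} = -u/\|u\|$), the velocity field is $\mathrm{V}(x) = \pdv{x}\cU(\theta^{(0)}, \delta_x) = 2\langle x, u\rangle\, u$, so \eqref{eqn:discrete-dynamics} reads
\[
	x_{t+1} = x_t + 2\gamma\,\langle x_t, u\rangle\, u \;.
\]

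Next I would diagonalize this recursion: setting $\hat u := u/\|u\|$ and decomposing $x_t = a_t\hat u + w_t$ with $a_t := \langle x_t, \hat u\rangle$ and $w_t := x_t - a_t\hat u \perp u$, and noting $\langle x_t, u\rangle = a_t\|u\|$, the update scales the $\hat u$-component while freezing the orthogonal part: $a_{t+1} = (1 + 2\gamma\|u\|^2)\,a_t$ and $w_{t+1} = w_t$. Hence $w_T = w_0$ and $a_T = \rho^T a_0$ with $\rho := 1 + 2\gamma\|\theta^\star - \theta^{(0)}\|^2 > 1$. The hypothesis $\langle x_0, \theta^\star - \theta^{(0)}\rangle \neq 0$ is exactly $a_0 \neq 0$, which forces $x_T \neq 0$ and $|a_T| = |a_0|\rho^T \to \infty$. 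Using $\Delta_{\mathrm{b}} = -\hat u$,
\[
	\Big|\big\langle \tfrac{x_T}{\|x_T\|}, \Delta_{\mathrm{b}}\big\rangle\Big| \;=\; \frac{|a_T|}{\sqrt{a_T^2 + \|w_0\|^2}} \;=\; \Big(1 + \tfrac{\|w_0\|^2}{a_0^2}\,\rho^{-2T}\Big)^{-1/2} \;\longrightarrow\; 1 \qquad (T \to \infty)\;,
\]
which gives the first display.

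For the rate I would combine the Cauchy--Schwarz upper bound $|\langle x_T/\|x_T\|, \Delta_{\mathrm{b}}\rangle| \leq 1$ with the elementary (Bernoulli-type) inequality $(1+\epsilon)^{-1/2} \geq 1 - \epsilon/2$ for $\epsilon \geq 0$, applied to $\epsilon_T := \|w_0\|^2 a_0^{-2}\rho^{-2T}$, to obtain
\[
	\Big|\big\langle \tfrac{x_T}{\|x_T\|}, \Delta_{\mathrm{b}}\big\rangle\Big| \;\in\; \Big[\,1 - \tfrac{\|w_0\|^2}{2 a_0^2}\, e^{-cT}\,,\; 1\,\Big]\;, \qquad c := 2\log\rho = 2\log\!\big(1 + 2\gamma\|\theta^\star - \theta^{(0)}\|^2\big)\;,
\]
using $\rho^{-2T} = e^{-cT}$; the constant $\|w_0\|^2/(2a_0^2)$ is the prefactor hidden in $O(\cdot)$.

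There is no serious analytic obstacle: the whole argument rests on the observation that the quadratic utility turns the Wasserstein-flow discretization into a linear map equal to the identity plus a rank-one perturbation, whose unique expanding eigendirection is the residual $\theta^\star - \theta^{(0)}$, so the normalized iterate aligns with it at a geometric rate. The only two points that warrant a line of care are (i) remaining inside $\ell^2_{\mathbb{N}}$ --- but $x_0 \in \ell^2_{\mathbb{N}}$ and each update adds a multiple of $\hat u \in \ell^2_{\mathbb{N}}$, so every iterate lies in $\ell^2_{\mathbb{N}}$ and the orthogonal decomposition is legitimate --- and (ii) the degenerate case $a_0 = 0$, in which $x_t$ is frozen and no alignment can occur; this is precisely what the hypothesis $\langle x_0, \theta^\star - \theta^{(0)}\rangle \neq 0$ rules out.
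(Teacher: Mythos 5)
Your proof is correct and follows essentially the same route as the paper's: compute $\cU(\theta^{(0)},\delta_x)=\langle x,\theta^\star-\theta^{(0)}\rangle^2+1$, observe the dynamic is the linear rank-one-plus-identity map $x_{t+1}=(I+\tilde\gamma\,\Delta_{\mathrm{b}}\Delta_{\mathrm{b}}^\top)x_t$, solve it in closed form as a geometric blow-up along $\Delta_{\mathrm{b}}$ with the orthogonal complement frozen, and read off the cosine. Your coordinate decomposition $x_t=a_t\hat u+w_t$ and the paper's projector notation are two ways of writing the same diagonalization, and your $c=2\log\rho$ matches the paper's constant exactly.
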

\begin{remark}
	\rm
	This theorem concerns the case when the current model $\theta^{(0)}$ is imperfect---namely $\| \theta^\star - \theta^{(0)} \| \neq 0$---the only case when distribution shifts that vary $\mu \in \cP(X)$ could impact learning the conditional relationship $\pi^\star_x$ and hence identifying the equilibrium $f^\star_{\mathrm{Bayes}}$ defined in \eqref{eqn:bayes}. The current model could be imperfect due to (a) $\supp(\mu^{(0)}) \subsetneq X$ the covariate does not span the full infinite-dimensional space, or (b) the learner only has finite sample access to the measure $\pi_{\mu^{(0)}} \in \cP(X \times Y)$. The theorem states that the adversarial distribution shift dynamics $\mu^{(0)} \rightarrow \mu^{(1)}$ align all the mass of the covariates along the most informative direction for the next stage of learning: the shifted distribution $\mu^{(1)}$ is asymptotically a measure along a one-dimensional ``blessing'' direction $\Delta_{\mathrm{b}}$, reducing the subsequent learning to a one-dimensional problem. Namely, the adversarial distribution shift asymptotically constructs the \textbf{optimal covariate design} for the next stage of learning: making the current model $\theta^{(0)}$ suffer is revealing the information towards the equilibrium of learning, the Bayes optimal model $\theta^\star$. The impact of the distribution shifts on the next stage learner, in this sequential game perspective, is formally stated in Theorem~\ref{thm:learner-reaction-regression}. The proof is based on power iterations as in principle component analysis.
\end{remark}

Now, we state the result for the infinite-dimensional classification problem, which contrasts sharply with the regression problem. We start by stating the conditions and discuss the assumption before stating the theorem.
\begin{assumption}[Initial condition]
	\label{asmp:init-condition}
	Given a fixed $r>0$, we call that two real numbers $(a_0, b_0)$ satisfy the initial condition, if $a_0+b_0 <0$ and
	\begin{align}
		\frac{e^{a_0+b_0} a_0}{1 - e^{2(a_0+b_0)}} &< 1 \;, \label{eqn:assump1} \\
		\frac{e^{a_0+b_0} a_0}{1+ e^{a_0+b_0}} &\geq \frac{1+\frac{1}{a_0}}{ 1+r + \frac{1}{a_0}} \;,\label{eqn:assump2}
	\end{align}
	with $a_0 >c$ for some large enough constant $c>0$.
\end{assumption}
\begin{remark}
	\rm
	When the initialization $a_0 > c$ with $c$ not too small, the assumption holds for a range of $b_0$: \eqref{eqn:assump1} and \eqref{eqn:assump2} are equivalent to
	{\small
	\begin{align*}
	   a_0	\frac{1+\sqrt{1+4 a_0^{-2}}}{2} < e^{-(a_0+b_0)} < a_0 \frac{1+r+a_0^{-1}}{1+a_0^{-1}} - 1 \;.
	\end{align*}
	}This is nonempty whenever $(1+ \frac{r}{1+a_0^{-1}} - a_0^{-1})^2 -1 -4a_0^{-2} > 0$, which is true for $a_0$ not too small.
\end{remark}

\begin{theorem}[Classification: directional convergence]
	\label{thm:classification-main}
	Consider the classification setting where $\ell(y', y) = -y' y + \log(1+e^{y'})$ and $\by|\bx = x \sim \mathrm{Bernoulli}\big( \sigma(\langle x, \theta^\star\rangle) \big)$. Assume $\theta^{(0)} \perp \theta^\star - \theta^{(0)}$ and define $r:= \frac{\| \theta^\star - \theta^{(0)} \|^2}{\| \theta^{(0)} \|^2} >0$. Consider fixed, small step-size in the adversarial distribution shift dynamic \eqref{eqn:discrete-dynamics} that satisfies $\eta := \gamma \| \theta^{(0)} \|^2 < \tfrac{1}{2(2+r)}$. Let $x_0 \in \supp(\mu^{(0)})$ and assume there exists a finite $t_0 \in \mathbb{N}$ with $(a_0, b_0) := \big( \langle x_{t_0}, \theta^{(0)} \rangle, \langle x_{t_0}, \theta^\star - \theta^{(0)} \rangle \big)$ satisfying Assumption~\ref{asmp:init-condition} for constants $r$ defined above and $c>1$. 
	Then 
	\begin{align}
		\lim_{T \rightarrow \infty}~  \left| \big\langle \tfrac{x_T}{\| x_T \|}, \Delta_{\mathrm{c}} \big\rangle  \right| = 1 \;, ~\text{where $\Delta_{\mathrm{c}}  \perp \theta^\star$ is defined in \eqref{eqn:curse-direction}}\;.
	\end{align}
	Moreover, the directional convergence is quadratic in $T/\log(T)$,
	\begin{align*}
		\left|\big\langle \tfrac{x_T}{\| x_T \|}, \Delta_{\mathrm{c}} \big\rangle   \right| \in \left[  1 - O\big( \tfrac{\log^2(T)}{T^2} \big)~,~ 1\right] \;.
	\end{align*}
\end{theorem}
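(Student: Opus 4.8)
The dynamics \eqref{eqn:discrete-dynamics} only couple through the two scalars $a_t := \langle x_t, \theta^{(0)}\rangle$ and $b_t := \langle x_t, \theta^\star - \theta^{(0)}\rangle$, so the plan is to reduce the infinite-dimensional flow to a two-dimensional recursion and then analyze that recursion. For the logistic loss, $\cU(\theta^{(0)}, \delta_x) = -\sigma(\langle x, \theta^\star\rangle)\langle x, \theta^{(0)}\rangle + \log(1 + e^{\langle x, \theta^{(0)}\rangle})$, whose gradient in $x$ is a linear combination of $\theta^{(0)}$ and $\theta^\star$; using $\theta^{(0)} \perp \theta^\star - \theta^{(0)}$ (equivalently $\langle x, \theta^\star\rangle = a_t + b_t$ in the relevant coordinates), the update $x_{t+1} = x_t + \gamma\, \partial_x\cU$ stays in the span of $\{\theta^{(0)}, \theta^\star - \theta^{(0)}\}$ plus a frozen orthogonal component, and projecting gives a closed recursion of the form $a_{t+1} = a_t + \gamma\,g(a_t,b_t)$, $b_{t+1} = b_t + \gamma\,h(a_t,b_t)$ for explicit $g,h$. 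I would first write this recursion down and identify, from the curse direction \eqref{eqn:curse-direction}, the exact linear functional of $(a_t,b_t)$ that must vanish in the limit: $\Delta_{\mathrm c} \perp \theta^\star$ means I want $\langle x_T, \theta^\star\rangle/\|x_T\| \to 0$, i.e. $(a_T + b_T)/\sqrt{a_T^2 + b_T^2} \to 0$ (after also checking the frozen orthogonal part contributes negligibly, or is zero if $x_0$ lies in the span — this is where Assumption~\ref{asmp:init-condition} and the $t_0$ warm-start should be used to guarantee the right sign/growth regime).

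The analytic heart is the asymptotics of the scalar recursion. I expect the mechanism to be: $\sigma(a_t + b_t) - \mathbf{1}\{\cdot\}$ type terms drive $a_t + b_t$ toward a regime where the sigmoid saturates, so the ``margin'' $m_t := a_t + b_t$ behaves like logistic-regression-on-separable-data: $m_t$ grows but only logarithmically, $m_t \asymp \log t$, because the gradient contribution decays like $e^{-m_t}$ once saturated (this is the familiar implicit-bias rate for exponential-tailed losses, cf. the $\log(T)$ factors in the statement). Meanwhile the component of $x_t$ orthogonal to $\theta^\star$ — call it $s_t$, a linear combination of $a_t,b_t$ — should grow linearly or at least polynomially in $t$ because its driving term does not saturate. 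Then $|\langle x_T/\|x_T\|, \Delta_{\mathrm c}\rangle|^2 = 1 - \langle x_T/\|x_T\|, \theta^\star/\|\theta^\star\|\rangle^2 = 1 - m_T^2/(\|\theta^\star\|^2\|x_T\|^2) \geq 1 - C\,m_T^2/s_T^2 \gtrsim 1 - C'\log^2(T)/T^2$, which is exactly the claimed bound. So the proof decomposes into: (i) a lower bound $s_T \gtrsim T$ (or $\gtrsim T/\mathrm{polylog}$); (ii) an upper bound $m_T \lesssim \log T$; (iii) combine via the Pythagorean identity above; (iv) handle the convergence of the full unit vector, not just its $\theta^\star$-component, by showing $x_T/\|x_T\|$ actually aligns with a fixed vector in $\mathrm{span}\{\theta^{(0)},\theta^\star-\theta^{(0)}\}$, which follows once the two-dimensional trajectory $(a_t,b_t)/\|(a_t,b_t)\|$ is shown to converge to the specific direction encoded by $\Delta_{\mathrm c}$ — and one checks that direction is consistent with \eqref{eqn:curse-direction} using $\|\theta^{(0)}\|,\|\theta^\star-\theta^{(0)}\|$ and orthogonality to fix the coefficients.

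The main obstacle is step (ii) together with the joint control needed for step (i): proving $m_t = \Theta(\log t)$ requires a fairly delicate two-sided argument on a nonlinear recursion where the ``effective learning rate'' for $m_t$ shrinks as $e^{-m_t}$ while $m_t$ itself depends on both coordinates, and one must simultaneously ensure $s_t$ keeps growing (so it does not stall while $m_t$ creeps up). I would handle this by a bootstrap/continuous-embedding argument: pass to the ODE $\dot m = \gamma\, e^{-m}(1 + o(1))$, $\dot s = \gamma\,c_0(1+o(1))$ valid once $m$ is large, solve to get $m(t) \sim \log(\gamma t)$ and $s(t) \sim \gamma c_0 t$, then transfer back to the discrete recursion with error terms absorbed into the $O(\cdot)$, being careful that the discretization error over $T$ steps does not swamp the $\log^2 T / T^2$ target (it will not, since the per-step error is $O(\gamma^2)$ and enters the ratio $m_T^2/s_T^2$ only through lower-order corrections). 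Assumption~\ref{asmp:init-condition} presumably exactly rules out the degenerate regimes (e.g.\ $m_0$ on the wrong side, or $s_0$ starting at zero) so that the monotone/saturating behavior kicks in after finitely many steps $t_0$; I would invoke it precisely at the point where I need $m_{t_0}$ large enough and $s_{t_0}$ of the correct sign to start the bootstrap.
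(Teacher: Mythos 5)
Your high-level reduction is right and matches the paper: project the dynamics onto $(a_t,b_t) = (\langle x_t,\theta^{(0)}\rangle, \langle x_t,\theta^\star-\theta^{(0)}\rangle)$, show $|a_T+b_T| = O(\log T)$ while $a_T,b_T = \Theta(T)$, note the frozen orthogonal residual is constant, and finish with the Pythagorean/normalization computation. But the proposed analytic mechanism for the $\log T$ scaling is qualitatively wrong, and I think this gap would break the bootstrap if you pursued it.

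You model the margin as $\dot m = \gamma e^{-m}(1+o(1))$, i.e. saturation of the logistic gradient in the style of implicit-bias arguments for separable data. Look at what the recursion actually gives for $m_t := a_t + b_t$:
\begin{align*}
m_{t+1} - m_t \;=\; \eta\Bigl[\bigl(\sigma(a_t)-\sigma(m_t)\bigr) \;-\; (1+r)\,\sigma'(m_t)\,a_t\Bigr].
\end{align*}
As $a_t\to\infty$ and $m_t\to-\infty$, the first bracketed term tends to $1$, an $O(1)$ quantity; the second term $\sigma'(m_t)a_t \sim e^{m_t}a_t$ couples the exponentially small factor to the linearly growing $a_t$. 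The drift of $m_t$ is therefore a \emph{near-cancellation of two $O(1)$ quantities}, not a small $e^{-m}$ remnant. If $\sigma'(m_t)a_t$ drifted to anything other than $\tfrac{1}{1+r}$ you would get $m_t$ changing linearly, which is incompatible with $a_t,b_t = \Theta(t)$. So the entire $\log T$ rate hinges on showing that the Lyapunov-type quantity $L_t := \sigma'(m_t)a_t / (\sigma(a_t)-\sigma(m_t))$ has a stable fixed point at $\tfrac{1}{1+r}$, and then reading off the $O(1/t)$ residual drift of $m_t$ from how close $L_t$ stays to that value. This is precisely what the paper's envelope functions $L^{\rm env\text{-}U}_t, L^{\rm env\text{-}L}_t$ are built to control, and why Assumption~\ref{asmp:init-condition} is phrased in terms of them. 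Your proposal has no stand-in for this invariant: the claim ``$\dot m = \gamma e^{-m}(1+o(1))$'' silently assumes the $\sigma(a_t)$ term and the $(1+r)\sigma'(m_t)a_t$ term have already cancelled to $o(1)$, which is the conclusion you need, not something you can take for granted going into the bootstrap.

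A closely related issue: you assert that $s_t$ (the $\Delta_{\mathrm c}$-component) grows linearly ``because its driving term does not saturate.'' But $b_{t+1}-b_t = -\eta r\,\sigma'(m_t)a_t$, the very same product that appears in the $m_t$ drift; it yields linear growth of $b_t$ only if $\sigma'(m_t)a_t$ is bounded away from zero, which is again the lower envelope of the Lyapunov quantity. Steps (i) and (ii) of your plan are therefore not separable: both are consequences of the fixed point $L_t \to \tfrac{1}{1+r}$, and neither is available in advance to seed the bootstrap. Once you identify and control this invariant, the rest of your outline (monotonicity after $t_0$, $|m_T| \lesssim \log T$, $a_T,b_T \sim t$, then divide out $\|x_T\|$) does line up with the paper's proof via Lemma~\ref{lem:nonlinear-recur}.
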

\begin{remark}
	\rm
	This theorem concerns also the case when the current model $\theta^{(0)}$ is useful yet imperfect---namely $\| \theta^\star - \theta^{(0)} \| \neq 0$, and $\| \theta^\star\| \neq 0$. In particular, this theorem studies when $\supp(\mu^{(0)}) \subsetneq X$. As stated before, the best response model $\theta^{(0)}$ associated with the measure $\mu^{(0)}$ takes the form
$\theta^{(0)} \in \mathrm{BR}(\mu^{(0)}) =  \left\{ \Pi_{\supp(\mu^{(0)})} \theta^\star + \Pi_{\supp(\mu^{(0)})}^\perp \xi~|~ \forall \xi \in \ell^2_{\mathbb{N}} \right\}$.
The minimum-norm solution for the best response set satisfies the assumption in the theorem, namely $\theta^{(0)} \perp \theta^\star - \theta^{(0)}$. 
The theorem is also about directional convergence: the adversarial distribution shift dynamics $\mu^{(0)} \rightarrow \mu^{(1)}$ asymptotically align all the mass of the covariates along a one-dimensional, ``curse'' direction $\Delta_{\mathrm{c}} \perp \theta^\star$, orthogonal to the Bayes optimal model. This directional alignment will reduce the subsequent learning to a one-dimensional problem that is the hardest, namely, the $(x, y) \sim \pi_{\mu^{(1)}}$ where $y$ is a Bernoulli coin-flip that is independent of $x$! Note the adversarial distribution shift (under the logistic loss) asymptotically constructs the \textbf{hardest covariate design} under the 0-1 loss, since the Bernoulli coin-flip is impossible to predict for the next stage of learning. Qualitatively, this contrasts sharply with the phenomenon in the regression setting. The adversarial distribution shift constructs a difficult covariate design, trapping the next stage of learning. Namely, making the current model $\theta^{(0)}$ suffer is constructing the hardest experimental design for identifying the Bayes optimal model $\theta^\star$. The impact of the distribution shifts on the next stage learner, in this sequential game perspective, is formally stated in Theorem~\ref{thm:learner-reaction-classification}. Quantitatively, the directional convergence to $\Delta_{\mathrm{c}}$ in the classification setting is quadratic in $T$, much slower than the directional convergence to $\Delta_{\mathrm{b}}$ in the regression setting, exponential in $T$. We invite the readers to Section~\ref{sec:numerics} for a preliminary numerical experiment illustrating the sharp contrast of Theorems~\ref{thm:regression-main} and \ref{thm:classification-main}, visualized in Figure~\ref{fig:regression} and Figure~\ref{fig:classification}. We find the contrast in the speed of directional convergence interesting in its own right.

Tracking down the exact behavior of the distribution shift dynamic (non-convex and non-linear) and establishing a sharp directional convergence rate are this paper's main technical innovations and difficulties. To overcome these challenges, we carefully construct two bounding envelopes refined recursively to characterize the dynamics analytically. Section~\ref{sec:intuition} elaborates on the main steps of the technical proof and key ideas.
% Discussions regarding the initial condition on $x_0$ will also be found in Section~\ref{sec:intuition}.
\end{remark}
Before studying the subsequent impact on the learner, we briefly discuss why a sharp dichotomy exists between regression and classification. A crucial factor is the loss function. One intuition that helps shed light on the distinction between classification and regression can be seen in the following special case. Consider when the learner $\theta^{(0)}$ has already learned the Bayes optimal function $\theta^\star$: (a) for the regression problem, the covariate distribution will not shift, since the gradient flow for covariate distribution has velocity zero, as a result of the quadratic loss function; (b) in contrast, for the classification problem, the covariate distribution will further shift to decrease the margin so to decrease the signal-to-noise ratio, as a result of the logistic loss function. For the general case when $ \theta^\star - \theta^{(0)} \neq 0$, the explicit directional convergence phenomenon is curious and technically challenging to derive, which is the main focus of the current paper.

\subsection{Impact on the Learner: Sequential Game Perspective}
\label{sec:impact-on-learner}

In this section, we investigate the impact of the covariate distribution shift on the next stage learner's gradient descent dynamic. The goal here is to demonstrate that the directional convergence results established in Theorems~\ref{thm:regression-main} and \ref{thm:classification-main} can translate to a direct impact on the learner's subsequent learning towards the Bayes optimal model $f^\star_{\mathrm{Bayes}}(x)  = \langle x, \theta^\star \rangle$, the equilibrium. The impact is a dichotomy that either comes as a blessing or a curse.

The sequential game between model $f_{\theta}$ and covariate distribution $\mu$ evolves according to the following protocol. Here we only focus on one round, namely from stage $t=0$ to $t=1$. 
\begin{enumerate}
	\item Adversarial covariate shift: the covariate distribution shifts from $\widehat{\mu}^{(0)} \rightarrow \widehat{\mu}^{(1)}$ given the previous model $\theta^{(0)}$. Here $\widehat{\mu}^{(0)}  = \frac{1}{n} \sum_{i=1}^n \delta_{x^{i}_0/\| x^{i}_0 \|}$ where $x^{i}_0 \sim \mu^{(0)}$, and $\widehat{\mu}^{(1)} := \frac{1}{n} \sum_{i=1}^n \delta_{x^{i}_{T}/\| x^{i}_{T} \|}$, where $x^{i}_{T}$ evolves according to \eqref{eqn:discrete-dynamics} after $T$ steps.
	\item Learner's subsequent action: the learner performs a one-step improvement using gradient descent given the shifted distribution $\widehat{\mu}^{(1)}$. Draw $n$-i.i.d. samples $(x^i, y^i)\sim \pi_{\widehat{\mu}^{(1)}}$ defined in \eqref{eqn:disintegration}, and update
	\begin{align}
		\label{eqn:learner-gradient-descent}
		\theta^{(1)} = \theta^{(0)} - \eta \cdot \frac{1}{n} \sum_{i=1}^n \pdv{\theta} \ell(\langle x^i, \theta \rangle, y^i)|_{\theta = \theta^{(0)}} \;.
	\end{align}
	Note that $\theta^{(1)}$ implicitly depends on $(n, T, \eta)$ so we subsequently denote as $\theta^{(1)}_{n, T, \eta}$.
\end{enumerate}
The curious reader may wonder about the renormalization such that $\widehat{\mu}^{(1)}$ is a probability measure on the unit sphere. Note that this is for convenience of analysis and does not change the qualitative phenomenon. The assumption is currently used to invoke the Hanson--Wright inequality in the proofs of Theorems~\ref{thm:learner-reaction-regression} and \ref{thm:learner-reaction-classification}.
\begin{theorem}[Regression: blessing to the learner]
	\label{thm:learner-reaction-regression}
	Consider the same setting as in Theorem~\ref{thm:regression-main}. For all $\theta^{(0)}$ such that $\| \theta^\star - \theta^{(0)} \| \neq 0$, the learner's one-step reaction to the distribution shift as in \eqref{eqn:learner-gradient-descent} with $\eta = 1/2$ satisfies
	\begin{align}
		\lim_{n \rightarrow \infty} \lim_{T\rightarrow \infty}~ \| \theta^\star - \theta^{(1)}_{n, T, \eta} \|  = 0 ~~\text{a.s.} \;. \label{eqn:regression-impact}
	\end{align}
\end{theorem}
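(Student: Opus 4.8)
The plan is to evaluate the learner's update \eqref{eqn:learner-gradient-descent} in closed form for the squared loss, then insert the directional convergence of Theorem~\ref{thm:regression-main} and track the two resulting terms as $T\to\infty$ and then $n\to\infty$. Write $v:=\theta^\star-\theta^{(0)}$, so $\|v\|\neq 0$ and $\Delta_{\mathrm{b}}=v/\|v\|$. For $\ell(f,y)=(f-y)^2$ one has $\partial_\theta\ell(\langle x^i,\theta\rangle,y^i)|_{\theta=\theta^{(0)}}=2(\langle x^i,\theta^{(0)}\rangle-y^i)\,x^i$, and since $y^i=\langle x^i,\theta^\star\rangle+\varepsilon^i$ with $\varepsilon^i$ i.i.d.\ $\mathrm{Gaussian}(0,1)$ independent of the $x^i$, setting $\eta=1/2$ in \eqref{eqn:learner-gradient-descent} gives
\begin{align*}
	\theta^{(1)}_{n,T,\eta}-\theta^\star \;=\; -\big(\mathrm{Id}-\widehat{\Sigma}_{n,T}\big)v \;+\; \frac1n\sum_{i=1}^n\varepsilon^i x^i\;,\qquad \widehat{\Sigma}_{n,T}:=\frac1n\sum_{i=1}^n x^i(x^i)^\top\;,
\end{align*}
where each $x^i$ is one of the normalized endpoints $x^j_T/\|x^j_T\|$ chosen by the resampling step, so $\|x^i\|=1$. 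This closed form is the only real computation; the rest is limit-taking.

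Next I would apply Theorem~\ref{thm:regression-main} to each trajectory started at $x^j_0\sim\mu^{(0)}$; this needs $\langle x^j_0,v\rangle\neq 0$, which is exactly the hypothesis of that theorem and holds $\mu^{(0)}$-a.s.\ under the non-degeneracy implicit in the setup. Since $x^j_T/\|x^j_T\|$ is a unit vector with $|\langle x^j_T/\|x^j_T\|,\Delta_{\mathrm{b}}\rangle|\to 1$, it converges to $\sigma_j\Delta_{\mathrm{b}}$ for some $\sigma_j\in\{\pm1\}$ (from the explicit recursion $x_{t+1}=x_t+2\gamma\langle x_t,v\rangle v$ one gets $\sigma_j=\operatorname{sign}\langle x^j_0,v\rangle$, but only $\sigma_j^2=1$ is used). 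Hence, as $T\to\infty$, each resampled $x^i\to\pm\Delta_{\mathrm{b}}$, so $\widehat{\Sigma}_{n,T}v\to\langle\Delta_{\mathrm{b}},v\rangle\Delta_{\mathrm{b}}=\|v\|\Delta_{\mathrm{b}}=v$ and $\frac1n\sum_i\varepsilon^i x^i\to\big(\frac1n\sum_i\sigma_{(i)}\varepsilon^i\big)\Delta_{\mathrm{b}}$, where $\sigma_{(i)}$ is the sign attached to the $i$-th resampled point. Because the update is a continuous (indeed polynomial) function of the finitely many unit vectors $\{x^i\}$ and scalars $\{\varepsilon^i\}$, these limits pass through: the \emph{blessing} is precisely the identity $\widehat{\Sigma}_{n,T}v\to v$ (the shifted design is perfectly aligned with the residual $v$), so the bias term $-(\mathrm{Id}-\widehat{\Sigma}_{n,T})v$ collapses to $0$, and therefore $\lim_{T\to\infty}\|\theta^\star-\theta^{(1)}_{n,T,\eta}\|=\big|\frac1n\sum_{i=1}^n\sigma_{(i)}\varepsilon^i\big|$ almost surely.

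Finally, letting $n\to\infty$: conditionally on $\{x^j_0\}$ and the resampling indices, the $\sigma_{(i)}\varepsilon^i$ are independent $\mathrm{Gaussian}(0,1)$ by symmetry of the noise, so $\frac1n\sum_i\sigma_{(i)}\varepsilon^i\sim\mathrm{Gaussian}(0,1/n)$; since $\sum_n\Pr\big(|\mathrm{Gaussian}(0,1/n)|>\delta\big)\le\sum_n 2e^{-n\delta^2/2}<\infty$ for every $\delta>0$, Borel--Cantelli yields $\frac1n\sum_i\sigma_{(i)}\varepsilon^i\to 0$ almost surely, which is \eqref{eqn:regression-impact}.

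The steps needing the most care are modest. First, interchanging the $T\to\infty$ limit with the gradient-descent map: this is immediate from continuity, since normalization keeps all $x^i$ on the unit sphere and the map depends on only finitely many of them. Second, and more substantively, the non-degeneracy $\langle x_0,v\rangle\neq0$ $\mu^{(0)}$-a.s.\ is essential rather than cosmetic: in the degenerate case $v\perp\supp(\mu^{(0)})$ (e.g.\ $\theta^{(0)}=\Pi_{\supp(\mu^{(0)})}\theta^\star$), the dynamics are frozen, $\widehat{\mu}^{(1)}$ never charges the $\Delta_{\mathrm{b}}$ direction, a positive fraction of the signal $v$ is never recovered, and the conclusion fails — so this hypothesis must be read into the statement. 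Everything else is a routine computation.
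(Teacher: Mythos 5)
Your proof is correct and follows the same overall architecture as the paper's: write the gradient-descent update in closed form, take $T\to\infty$ first so the shifted design aligns with $\Delta_{\mathrm{b}}$ and kills the bias term $(\mathrm{Id}-\widehat{\Sigma}_{n,T})v$, then take $n\to\infty$ to kill the noise term. The differences are in how the two limits are handled, and your route is in one place actually tighter. The paper bounds $\|\theta^{(1)}-\theta^\star\|$ by the triangle inequality, controls the bias piece by decomposing along $\Delta_{\mathrm{b}}$ and its orthogonal complement, and controls the noise piece $\|\frac1n\sum_i\varepsilon^i x^i\|$ via the Hanson--Wright inequality, obtaining a bound of order $\sqrt{\log(1/\delta)/n}$ that holds with probability $1-\delta$; it stops there and does not explicitly convert the high-probability statement into the almost-sure claim of \eqref{eqn:regression-impact}. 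You instead pass to the exact $T\to\infty$ limit --- noting each resampled $x^i$ converges to $\pm\Delta_{\mathrm{b}}$ on the unit sphere and the update depends continuously on finitely many points --- which collapses the noise term to the scalar $\bigl|\frac1n\sum_i\sigma_{(i)}\varepsilon^i\bigr|$, exactly $|\mathrm{Gaussian}(0,1/n)|$ by symmetry, and then Borel--Cantelli gives the a.s.\ convergence cleanly. This is both simpler (no operator-norm concentration machinery) and finishes the almost-sure claim that the paper's proof leaves implicit.

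One remark worth keeping: you are right to flag that the hypothesis of Theorem~\ref{thm:regression-main} --- namely $\langle x_0,\theta^\star-\theta^{(0)}\rangle\neq 0$ --- must be read into Theorem~\ref{thm:learner-reaction-regression} via the phrase ``consider the same setting.'' As you observe, if $\theta^{(0)}=\Pi_{\supp(\mu^{(0)})}\theta^\star$ is the minimum-norm best response, then $\theta^\star-\theta^{(0)}\perp\supp(\mu^{(0)})$, the adversarial flow is frozen, and the conclusion fails even though $\|\theta^\star-\theta^{(0)}\|\neq 0$. This is a genuine caveat on the theorem's scope, not a gap in your argument, and it is good that you make it explicit rather than silently assuming it away.
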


\begin{theorem}[Classification: curse to the learner]
	\label{thm:learner-reaction-classification}
	Consider the same setting as in Theorem~\ref{thm:classification-main}. For all $\theta^{(0)}$ such that $\langle \theta^\star - \theta^{(0)}, \theta^\star \rangle \neq 0$, the learner's one-step reaction to the distribution shift as in \eqref{eqn:learner-gradient-descent} with any fixed $\eta>0$ satisfies
	\begin{align*}
		\lim_{n \rightarrow \infty} \lim_{T\rightarrow \infty}~ \frac{\langle \theta^\star - \theta^{(1)}_{n, T, \eta}, \theta^\star \rangle}{\langle \theta^\star - \theta^{(0)}, \theta^\star \rangle} = 1 \;.
	\end{align*}
	Moreover,
	\begin{align}
		\liminf_{n \rightarrow \infty} \lim_{T\rightarrow \infty}~ \| \theta^\star - \theta^{(1)}_{n, T, \eta} \|  > 0 \;. \label{eqn:classification-impact}
	\end{align}
\end{theorem}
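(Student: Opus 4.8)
The plan is to exploit the directional collapse of Theorem~\ref{thm:classification-main}: it forces the renormalized covariates the learner sees to concentrate on the line $\mathbb{R}\Delta_{\mathrm{c}}$, and since $\Delta_{\mathrm{c}}\perp\theta^\star$ the fresh data $(x^i,y^i)$ becomes asymptotically uninformative about $\theta^\star$, so the gradient step \eqref{eqn:learner-gradient-descent} cannot shrink the residual $\theta^\star-\theta^{(0)}$ along $\theta^\star$. First I would write out the logistic gradient, $\partial_{\theta}\ell(\langle x,\theta\rangle,y)=(\sigma(\langle x,\theta\rangle)-y)x$, so that, abbreviating $u^j_T:=x^j_T/\|x^j_T\|$ for the atoms of $\widehat\mu^{(1)}$ and using \eqref{eqn:disintegration}, the update becomes $\theta^{(1)}_{n,T,\eta}=\theta^{(0)}-\tfrac{\eta}{n}\sum_{i=1}^{n}(\sigma(\langle x^i,\theta^{(0)}\rangle)-y^i)x^i$, with each $x^i$ equal to some $u^j_T$ and $y^i\mid x^i\sim\mathrm{Bernoulli}(\sigma(\langle x^i,\theta^\star\rangle))$. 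Pairing with $\theta^\star$,
\begin{align*}
\langle\theta^\star-\theta^{(1)}_{n,T,\eta},\theta^\star\rangle=\langle\theta^\star-\theta^{(0)},\theta^\star\rangle+\frac{\eta}{n}\sum_{i=1}^{n}\bigl(\sigma(\langle x^i,\theta^{(0)}\rangle)-y^i\bigr)\langle x^i,\theta^\star\rangle .
\end{align*}

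Next I would show the residual scores $\langle x^i,\theta^\star\rangle$ vanish. Fixing $n$ and assuming (as throughout) that $\mu^{(0)}$-a.e.\ point is an admissible initialization in the sense of Assumption~\ref{asmp:init-condition}, almost surely all $n$ sampled trajectories satisfy the hypotheses of Theorem~\ref{thm:classification-main}, so $|\langle u^j_T,\Delta_{\mathrm{c}}\rangle|=1-O(\log^2T/T^2)$ uniformly over the finitely many $j$. Since $\Delta_{\mathrm{c}}$ and $\theta^\star/\|\theta^\star\|$ are orthonormal, Bessel's inequality gives $\langle u^j_T,\theta^\star/\|\theta^\star\|\rangle^2\le 1-\langle u^j_T,\Delta_{\mathrm{c}}\rangle^2=O(\log^2T/T^2)$, hence $\langle x^i,\theta^\star\rangle\to 0$ for every atom; as $|\sigma(\cdot)-y^i|\le 1$, the finite average above tends to $0$ as $T\to\infty$. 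This gives $\lim_{T\to\infty}\langle\theta^\star-\theta^{(1)}_{n,T,\eta},\theta^\star\rangle=\langle\theta^\star-\theta^{(0)},\theta^\star\rangle$ almost surely, a limit independent of $n$; dividing by $\langle\theta^\star-\theta^{(0)},\theta^\star\rangle\neq 0$ yields the first display.

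For \eqref{eqn:classification-impact} I would combine this with Cauchy--Schwarz: $\|\theta^\star-\theta^{(1)}_{n,T,\eta}\|\ge|\langle\theta^\star-\theta^{(1)}_{n,T,\eta},\theta^\star\rangle|/\|\theta^\star\|$ for every $T$, so letting $T\to\infty$ gives $\lim_{T\to\infty}\|\theta^\star-\theta^{(1)}_{n,T,\eta}\|\ge|\langle\theta^\star-\theta^{(0)},\theta^\star\rangle|/\|\theta^\star\|>0$ for every $n$, and $\liminf_{n\to\infty}$ preserves the strict bound. (Existence of the $T$-limit is inherited from the proof of Theorem~\ref{thm:classification-main}, which gives norm convergence $u^j_T\to s_j\Delta_{\mathrm{c}}$ with a fixed sign $s_j\in\{\pm1\}$, the dynamics \eqref{eqn:discrete-dynamics} being planar in $\mathrm{span}\{\theta^{(0)},\theta^\star\}$ and therefore unable to oscillate in direction.) Optionally, to pin the exact limit, I would realize $y^i=\mathbf{1}\{U_i\le\sigma(\langle x^i,\theta^\star\rangle)\}$ with i.i.d.\ uniform $U_i$: since $\sigma(\langle x^i,\theta^\star\rangle)\to\tfrac12$, the labels become fair coin flips independent of the atoms and the law of large numbers sends the gradient term to its zero mean, so $\lim_{n\to\infty}\lim_{T\to\infty}\theta^{(1)}_{n,T,\eta}=\theta^{(0)}$ a.s.\ and the limiting distance is exactly $\|\theta^\star-\theta^{(0)}\|$.

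The hard part is not here: the real work is Theorem~\ref{thm:classification-main} itself --- tracking the nonlinear, non-convex flow \eqref{eqn:discrete-dynamics} and establishing the subquadratic directional convergence to $\Delta_{\mathrm{c}}$. Granting that, the only delicate points in the present argument are (i) verifying that the finitely many sampled trajectories are a.s.\ admissible and share a common $O(\cdot)$ constant (immediate by finiteness of $n$), and (ii) handling the fact that the label law of $y^i$ depends on $T$ through $\widehat\mu^{(1)}$ --- harmless because the scores $\langle x^i,\theta^\star\rangle$ vanish while $\sigma(\cdot)-y^i$ stays bounded, and because taking the $T$-limit before the $n$-limit keeps every average a finite sum, eliminating any interchange-of-limits concern.
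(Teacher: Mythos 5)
Your argument is correct and follows essentially the same route as the paper's proof: pair the logistic gradient step with $\theta^\star$, use the directional collapse of Theorem~\ref{thm:classification-main} plus $\Delta_{\mathrm{c}}\perp\theta^\star$ to force $\langle x^i_T,\theta^\star\rangle\to 0$ while $|\sigma(\cdot)-y^i|\le 1$ kills the correction term, then Cauchy--Schwarz yields the norm lower bound, and the $n$-limit is trivial because the $T$-limit is already $n$-free. Your Bessel-inequality step and the coupling of $y^i$ to a uniform $U_i$ are minor tidyings of the same computation, and the parenthetical note on why the $T$-limit actually exists (planar, monotone dynamics) is a small gap in the paper's writeup that you are right to flag, but none of this changes the structure of the argument.
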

\begin{remark}
	\rm
	The result in Theorem~\ref{thm:learner-reaction-classification} holds valid for any fixed number of gradient descent steps for the learner
	\begin{align*}
		\theta^{(t+1)} = \theta^{(t)} - \eta \cdot \frac{1}{n} \sum_{i=1}^n \pdv{\theta} \ell(\langle x^i, \theta \rangle, y^i)|_{\theta = \theta^{(t)}} \;.
	\end{align*}
	Therefore, the learner gets stuck with the adversarial distribution $\widehat{\mu}^{(1)}$ in making progress towards $\theta^\star$. Comparing Theorems~\ref{thm:learner-reaction-regression} and \ref{thm:learner-reaction-classification}, we see that the adversarial distribution shifts in the regression setting make the learner's one-step subsequent move optimal! \eqref{eqn:regression-impact} shows that one-step improvement using gradient descent dynamic as in \eqref{eqn:learner-gradient-descent} will reach the Bayes optimal model, also the equilibrium to the minimax game as in \eqref{eqn:minimax}. On the contrary, in the classification setting, \eqref{eqn:classification-impact} shows that subsequent learner's move using gradient descent dynamic (regardless of the number of steps) will be trapped with no improvement, preventing the learner from reaching the Bayes optimal model.
\end{remark}

\subsection{Numerical Illustration}
\label{sec:numerics}
In this section, we provide two simple numerical simulations, one for regression and one for classification, to contrast the sharp differences of the directional convergence established in Theorems~\ref{thm:regression-main} and \ref{thm:classification-main}. 

\paragraph{Experiment setup} Both simulations are based on a high-dimensional setting where $X = \mathbb{R}^{200}$ and $\supp(\mu^{(0)}) = \mathbb{R}^{100}$ a randomly drawn subspace from the Haar measure (a uniformly drawn subspace of dimension 100). Specify a Bayes optimal model $\theta^\star = [1, 1/2, \ldots, 1/i,\ldots, 1/200]^\top$, denoted by the the {\color{orange} Yellow $\star$} in the figures. The best response model restricted to the subspace $\supp(\mu^{(0)})$, $\{\tfrac{\theta^{(0)}}{\| \theta^{(0)} \|}$ direction is noted by {\color{ForestGreen} Green $\blacktriangle$}, and the remaining signal direction $\tfrac{\theta^\star - \theta^{(0)}}{\|\theta^\star - \theta^{(0)} \|}$  is denoted by {\color{red} Red $\times$}. The probability distribution of the covariates is shown by the {\color{MidnightBlue} Blue $\bullet$}. For the experiment, we consider the case when $\theta^{(0)} \perp \theta^\star - \theta^{(0)}$. To visualize the adversarial distribution shift on a two-dimensional plane, we project all points to the two-dimensional subspace with horizontal and vertical axes $\{\tfrac{\theta^{(0)}}{\| \theta^{(0)} \|}, \tfrac{\theta^\star - \theta^{(0)}}{\|\theta^\star - \theta^{(0)} \|}\}$. In each simulation, the adversarial distribution shift dynamic is visualized by the motion of the {\color{MidnightBlue} Blue $\bullet$} data clouds. Since we focus on directional convergence, every data point is visualized by its direction, normalized to the unit ball in $\mathbb{R}^{200}$. Concretely, for each draw $x_0 \sim \mu^{(0)}$, we plot at each timestamp $\tfrac{x_t}{\| x_t \|}$, projected to the plane. 
\begin{figure}[ht]
\centering
\includegraphics[width=0.8\linewidth]{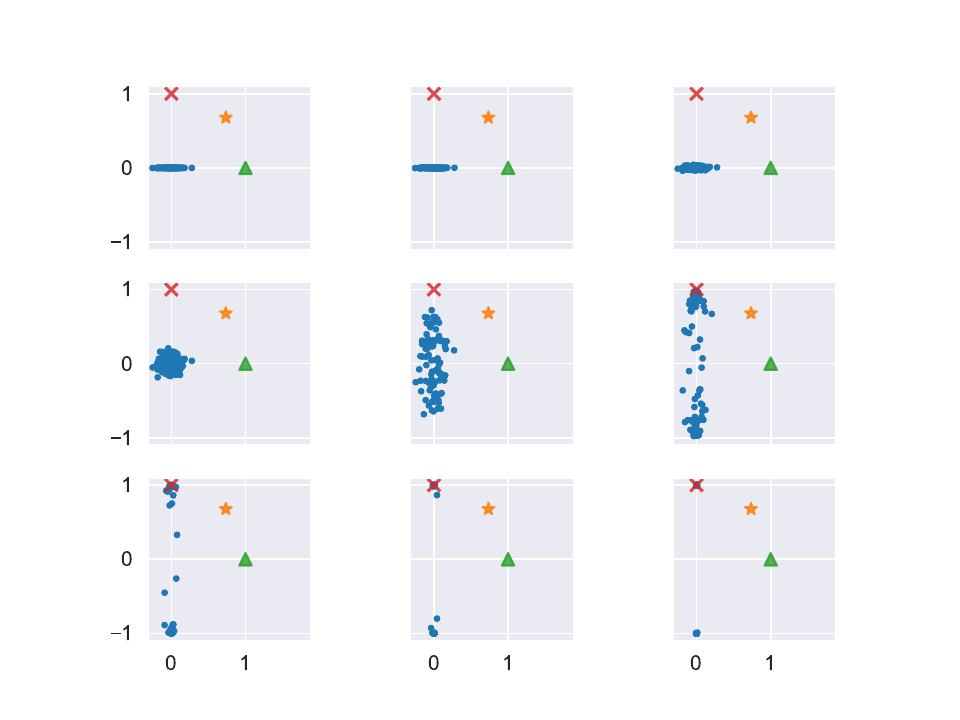}
\caption{\small Regression setting, directional convergence. From left to right, top to bottom, we plot the directional information at timestamp $t=0, 5, 10, \ldots, 40$, once every $5$ iterations.}
\label{fig:regression}
\end{figure}
\paragraph{Directional convergence: regression}
Consider \eqref{eqn:utility} with $\by|\bx = x \sim \mathrm{Gaussian}\big(\langle x, 
	\theta^\star\rangle, 1 \big)$ and $\ell(y', y) = (y'- y)^2$. 
The adversarial distribution shift evolves according to \eqref{eqn:discrete-dynamics}. Here the blessing direction $\Delta_{\mathrm{b}}$ defined in \eqref{eqn:bless-direction} is precisely marked by {\color{red} Red $\times$}. As seen in Figure~\ref{fig:regression}, the {\color{MidnightBlue} Blue $\bullet$} data clouds collapsed to a align perfectly to $\Delta_{\mathrm{b}}$, rapidly. We emphasize that since the simulation is done in high dimensions, directional convergence $|\langle \tfrac{x_t}{\| x_t \|}, \Delta_{\mathrm{b}}  \rangle| =1$ is only true when the {\color{MidnightBlue} Blue $\bullet$} perfectly lands on $\{ \pm \Delta_{\mathrm{b}} \}$, shown in $t=40$ (the bottom right subfigure); just aligning to a direction in the two-dimensional domain does not imply directional convergence in $\R^{200}$.
\begin{figure}[ht]
\centering
\includegraphics[width=0.8\linewidth]{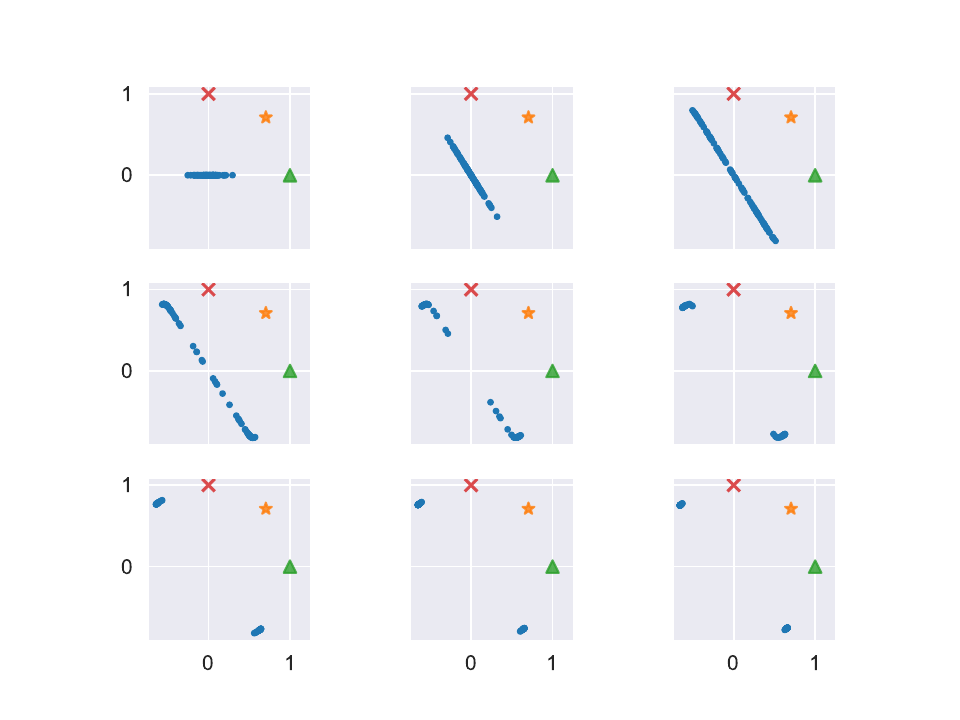}
\caption{\small Classification setting, directional convergence. From left to right, top to bottom, we plot the directional information at timestamp $t=0, 25, 50, \ldots, 200$, once every $25$ iterations.}
\label{fig:classification}
\end{figure}
\paragraph{Directional convergence: classification}
Consider \eqref{eqn:utility} with  $\by|\bx = x \sim \mathrm{Bernoulli}\big( \sigma(\langle x, \theta^\star\rangle) \big)$ and $\ell(y', y) = -y' y + \log(1+e^{y'})$.
The adversarial distribution shift evolves according to \eqref{eqn:discrete-dynamics}. Here the curse direction $\Delta_{\mathrm{c}}$ defined in \eqref{eqn:curse-direction} is perpendicular to {\color{orange} Yellow $\star$}. Seen in Figure~\ref{fig:classification}, the {\color{MidnightBlue} Blue $\bullet$} data clouds eventually land on the directions $\{ \pm \Delta_{\mathrm{c}} \}$. Compared to Figure~\ref{fig:regression}, the direction $\{ \pm \Delta_{\mathrm{c}} \}$ is different from the regression case $\{ \pm \Delta_{\mathrm{b}} \}$, and the convergence is much slower ($T^{-2}\log^2(T)$ vs. $\exp(-T)$), as proved by Theorems~\ref{thm:learner-reaction-regression} and \ref{thm:learner-reaction-classification}. Again we emphasize that alignment to a direction in the two-dimensional domain does not imply directional convergence in $\R^{200}$: $t=50$ (the top right subfigure) $|\langle \tfrac{x_t}{\| x_t \|}, \Delta_{\mathrm{c}}  \rangle| \neq 1$; only when $t = 175$ (the bottom middle subfigure), we have roughly directional convergence $|\langle \tfrac{x_t}{\| x_t \|}, \Delta_{\mathrm{c}}  \rangle| =1$.

\subsection{Intuition of the Proof}
\label{sec:intuition}

In this section, we elaborate on the intuition and technical innovation behind the proof of Theorem~\ref{thm:classification-main}. Let $\sigma(z) = 1/(1+e^{-z})$ be the sigmoid function and $\sigma'(z)=\sigma(z)(1-\sigma(z))$. The dynamic of the distribution shift is non-convex and non-linear, following the equation
\begin{align*}
	x_{t+1} =x_{t} + \gamma \cdot \left[ -\sigma(\langle x_{t}, \theta^\star \rangle) \big(1 - \sigma(\langle x_{t}, \theta^\star \rangle)\big) \langle x_t, \theta^{(0)} \rangle \cdot \theta^\star + \big( \sigma(\langle x_{t},  \theta^{(0)} \rangle) - \sigma(\langle x_{t}, \theta^\star \rangle)\big) \cdot \theta^{(0)} \right].
\end{align*}
Define $\eta := \gamma \| \theta^{(0)} \|^2 >0$ and $r:= \frac{\| \theta^\star - \theta^{(0)} \|^2}{\| \theta^{(0)} \|^2}$. We focus on two summary statistics to keep track of the directional convergence. For all $t\geq 0$, define a sequence of real values $a_t, b_t \in \R$
	\begin{align*}
		a_{t} := \langle x_t, \theta^{(0)} \rangle \;, ~~b_{t} := \langle x_t, \theta^\star - \theta^{(0)}\rangle \;.
	\end{align*}
The non-linear evolution of the summary statistics is thus defined,
	\begin{align*}
		a_{t+1} &= a_{t} - \eta \cdot \sigma'(a_{t}+b_{t}) a_{t} + \eta \cdot \big( \sigma(a_{t}) - \sigma(a_{t}+b_{t})\big) \;, \\
		b_{t+1} &= b_{t} - \eta \cdot r\sigma'(a_{t}+b_{t}) a_{t} \;.
	\end{align*}
Recall that $b_0 = \langle x_0, \theta^\star - \theta^{(0)}\rangle = \langle x_0, \theta^\star - \Pi_{\supp(\mu^{(0)})}  \theta^\star \rangle =0$ since  $\theta^{(0)} \in \mathrm{BR}(\mu^{(0)})$ and $x_0 \in \supp(\mu^{(0)})$. Without loss of generality, we consider $a_0 = \langle x_0, \theta^{(0)}\rangle >0$. The directional convergence now hinges on studying $\{ a_t, b_t \}_{t\geq 0}$, done in Lemma~\ref{lem:key-limit}.

The proof builds upon the following ``rough'' observation of $\{ a_t, b_t \}_{t\geq 0}$: after a finite time $t_0$, a key quantity ($L$ for Lyapunov)
\begin{align}
	\label{eqn:Lyapunov}
	L_t:= \frac{\sigma'(a_t+b_t) a_t}{\sigma(a_t) - \sigma(a_t+b_t)} < 1
\end{align}
will cross below threshold $1$ and deviate away from the threshold $1$ for $t\geq t_0$. However, perhaps surprisingly, one can show even when $t\rightarrow \infty$, the quantity never cross below a threshold
\begin{align*}
	L_t \geq \frac{1}{1+r}, ~\forall t\geq t_0 \;.
\end{align*}
Namely, the threshold $\frac{1}{1+r}$ is a stable fixed point for the quantity $L_t$. The quantity $L_t$ regulates the monotonicity of the updates $a_t, b_t, a_t+b_t$ and determines the order of magnitude for each term, see Lemma~\ref{lem:nonlinear-recur}.

The above intuition is educative but hard to directly operate upon over iterations, due to the non-linear form of \eqref{eqn:Lyapunov} and the nonlinear recursions of $\{a_t, b_t\}$. Instead of directly working with $L_t$, we build two envelopes inspired by $L_t$ that are easier to control during recursions, done in Lemma~\ref{lem:recur-estimate}. We define
\begin{align*}
	L^{\rm env\mathchar`-U}_t := \frac{e^{a_t+b_t} a_t}{1 - e^{2(a_t+b_t)}}, ~\text{and}~ L^{\rm env\mathchar`-L}_t := \frac{e^{a_t+b_t} a_t}{1 + e^{a_t+b_t}} \;.
\end{align*}
We show that these two envelopes are related to $L_t$ in the following sense (Lemma~\ref{lem:helper-function}), but not sandwiching it (we only have $L^{\rm env\mathchar`-L}_t \leq \min\{ L_t, L^{\rm env\mathchar`-U}_t \}$) 
\begin{align*}
	L^{\rm env\mathchar`-U}_t < 1 \implies L_t < 1, ~\text{and}~ L^{\rm env\mathchar`-L}_t > \frac{1}{1+r} \implies L_t > \frac{1}{1+r} \;.
\end{align*}
It turns out that the envelopes intervene cleanly with the non-linear recursions for $t \rightarrow t+1$. The crux of the argument lies in a strengthened version of the ``rough observation'' outlined in the previous paragraph, specifically done in Lemma~\ref{lem:recur-estimate}. 
On the one hand, if the lower envelope function $L^{\rm env\mathchar`-L}_t > \frac{1+a_t^{-1}}{1+r + a_t^{-1}} \in [\frac{1}{1+r}, 1]$, then the upper envelope function decreases in the recursion, $L^{\rm env\mathchar`-U}_{t+1} < L^{\rm env\mathchar`-U}_{t} < 1$; On the other hand, the lower envelope function cannot decrease too much, in the following sense
\begin{align*}
	L^{\rm env\mathchar`-L}_t > \frac{1+a_t^{-1}}{1+r + a_t^{-1}} \implies L^{\rm env\mathchar`-L}_{t+1} > \frac{1+a_{t+1}^{-1}}{1+r + a_{t+1}^{-1}} > \frac{1}{1+r} \;.
\end{align*}
The two envelope functions also ensure the monotonicity of $a_t+b_t \downarrow -\infty$ and $a_t \uparrow +\infty$. To sum up, we can show that the $L_t$ has $\frac{1}{1+r}$ as the stable fixed point. The analytical characterization of $L_t$ ensures an explicit rate on the directional convergence. Finally, the Assumption~\ref{asmp:init-condition} is a mild condition requiring the dynamic of $L_t$ to reach below $1$. All the detailed proof can be found in Appendix~\ref{sec:proofs}.

\section{Discussion and Future Work}

This paper studied covariate shifts from game-theoretic and dynamic viewpoints, with the underlying Bayes optimal model being invariant. In particular, we show that under the Wasserstein gradient flow, the distribution of covariates will converge in directions in both regression and classification. However, the result presents as a dichotomy: a blessing in regression, the adversarial covariate shifts in an exponential rate to an optimal experimental design for rapid subsequent learning; a curse in classification, the adversarial covariate shifts in a subquadratic rate fast to the hardest experimental design trapping subsequent learning. We view the work as a starting point for unveiling new insights for adversarial learning and covariate shift: it cautions the researchers to be aware that robust optimization is brittle depending on the learning losses, model complexity, and early stopping. In particular, following potential directions are left as future work.

\paragraph{Discriminative vs. Generative}
This paper considers discriminative models for the joint distribution, where the conditional distribution $P(Y|X)$ stays invariant, yet the covariate distribution $P(X)$ can shift. The main reason is to define the Bayes optimal model as the equilibrium, invariant regardless of the covariate distribution $P(X)$. We study at the population level to simplify the main results and analysis. Another line of literature on adversarial examples, in the classification setting only, considers a generative model, where $P(Y)$ stays untouched, yet $P(X|Y)$ are allowed to shift. By simple Bayes rule, the Bayes optimal model $P(Y|X)$ will consequently vary, making the concept a moving target. It is still to be determined if the notion of equilibrium or invariance exists in the generative setting. We leave it as a future direction for investigation.

\paragraph{Iterative Game Updates}
This paper only considers running the subsequent learning after the covariate shift reaches stationarity in direction. Generally, one may envision the game between the learner and nature by iteratively running gradient descent and ascent dynamics. The non-asymptotic analysis for iterative game updates, and the trade-offs on step sizes between the learner and the covariate shift will require future work. The analysis of iterative game updates could also benefit the understanding of generative models like generative adversarial networks.

\paragraph{Complex Models}
This paper shows curious insights into studying infinite-dimensional linear models with square and logistic loss. What will happen for other nonlinear models, such as neural networks? Extensions to nonlinear models require highly technical work and could lead to new insights on covariate shifts.

\acks{Liang acknowledges the generous support from the NSF CAREER Grant (DMS-2042473) and the William Ladany Faculty Fellowship from the University of Chicago Booth School of Business.}
%%%%%%%%%%%%%%%%%%%%%%%%%%%%%%%%%%%%%%%%%%%%%%
%% Reference                                %%
%%%%%%%%%%%%%%%%%%%%%%%%%%%%%%%%%%%%%%%%%%%%%%

\vskip 0.2in
\bibliography{ref-rev.bib}

\begin{thebibliography}{41}
\providecommand{\natexlab}[1]{#1}
\providecommand{\url}[1]{\texttt{#1}}
\expandafter\ifx\csname urlstyle\endcsname\relax
  \providecommand{\doi}[1]{doi: #1}\else
  \providecommand{\doi}{doi: \begingroup \urlstyle{rm}\Url}\fi

\bibitem[Ambrosio et~al.(2005)Ambrosio, Gigli, and
  Savar{\'e}]{ambrosio2005gradient}
L.~Ambrosio, N.~Gigli, and G.~Savar{\'e}.
\newblock \emph{Gradient flows: in metric spaces and in the space of
  probability measures}.
\newblock Springer Science \& Business Media, 2005.

\bibitem[Bao et~al.(2020)Bao, Scott, and Sugiyama]{bao2020calibrated}
H.~Bao, C.~Scott, and M.~Sugiyama.
\newblock Calibrated surrogate losses for adversarially robust classification.
\newblock In \emph{Conference on Learning Theory}, pages 408--451. PMLR, 2020.

\bibitem[Bartlett et~al.(2021)Bartlett, Bubeck, and
  Cherapanamjeri]{bartlett2021adversarial}
P.~Bartlett, S.~Bubeck, and Y.~Cherapanamjeri.
\newblock Adversarial examples in multi-layer random relu networks.
\newblock \emph{Advances in Neural Information Processing Systems},
  34:\penalty0 9241--9252, 2021.

\bibitem[Ben-David et~al.(2006)Ben-David, Blitzer, Crammer, and
  Pereira]{ben2006analysis}
S.~Ben-David, J.~Blitzer, K.~Crammer, and F.~Pereira.
\newblock Analysis of representations for domain adaptation.
\newblock \emph{Advances in Neural Information Processing Systems}, 19, 2006.

\bibitem[Ben-David et~al.(2010{\natexlab{a}})Ben-David, Blitzer, Crammer,
  Kulesza, Pereira, and Vaughan]{ben2010theory}
S.~Ben-David, J.~Blitzer, K.~Crammer, A.~Kulesza, F.~Pereira, and J.~W.
  Vaughan.
\newblock A theory of learning from different domains.
\newblock \emph{Machine Learning}, 79:\penalty0 151--175, 2010{\natexlab{a}}.

\bibitem[Ben-David et~al.(2010{\natexlab{b}})Ben-David, Lu, Luu, and
  P{\'a}l]{david2010impossibility}
S.~Ben-David, T.~Lu, T.~Luu, and D.~P{\'a}l.
\newblock Impossibility theorems for domain adaptation.
\newblock In \emph{Proceedings of the Thirteenth International Conference on
  Artificial Intelligence and Statistics}, pages 129--136. JMLR Workshop and
  Conference Proceedings, 2010{\natexlab{b}}.

\bibitem[Blitzer et~al.(2007)Blitzer, Crammer, Kulesza, Pereira, and
  Wortman]{blitzer2007learning}
J.~Blitzer, K.~Crammer, A.~Kulesza, F.~Pereira, and J.~Wortman.
\newblock Learning bounds for domain adaptation.
\newblock \emph{Advances in Neural Information Processing Systems}, 20, 2007.

\bibitem[Bubeck et~al.(2021)Bubeck, Cherapanamjeri, Gidel, and Tachet~des
  Combes]{bubeck2021single}
S.~Bubeck, Y.~Cherapanamjeri, G.~Gidel, and R.~Tachet~des Combes.
\newblock A single gradient step finds adversarial examples on random
  two-layers neural networks.
\newblock \emph{Advances in Neural Information Processing Systems},
  34:\penalty0 10081--10091, 2021.

\bibitem[Daskalakis et~al.(2017)Daskalakis, Ilyas, Syrgkanis, and
  Zeng]{daskalakis2017training}
C.~Daskalakis, A.~Ilyas, V.~Syrgkanis, and H.~Zeng.
\newblock Training gans with optimism.
\newblock \emph{arXiv preprint arXiv:1711.00141}, 2017.

\bibitem[Delage and Ye(2010)]{delage2010distributionally}
E.~Delage and Y.~Ye.
\newblock Distributionally robust optimization under moment uncertainty with
  application to data-driven problems.
\newblock \emph{Operations Research}, 58\penalty0 (3):\penalty0 595--612, 2010.

\bibitem[Dziugaite et~al.(2015)Dziugaite, Roy, and
  Ghahramani]{dziugaite2015training}
G.~K. Dziugaite, D.~M. Roy, and Z.~Ghahramani.
\newblock Training generative neural networks via maximum mean discrepancy
  optimization.
\newblock \emph{arXiv preprint arXiv:1505.03906}, 2015.

\bibitem[Freund and Schapire(1997)]{freund1997decision}
Y.~Freund and R.~E. Schapire.
\newblock A decision-theoretic generalization of on-line learning and an
  application to boosting.
\newblock \emph{Journal of Computer and System Sciences}, 55\penalty0
  (1):\penalty0 119--139, 1997.

\bibitem[Freund and Schapire(1999)]{freund1999adaptive}
Y.~Freund and R.~E. Schapire.
\newblock Adaptive game playing using multiplicative weights.
\newblock \emph{Games and Economic Behavior}, 29\penalty0 (1-2):\penalty0
  79--103, 1999.

\bibitem[Ganin et~al.(2016)Ganin, Ustinova, Ajakan, Germain, Larochelle,
  Laviolette, Marchand, and Lempitsky]{ganin2016domain}
Y.~Ganin, E.~Ustinova, H.~Ajakan, P.~Germain, H.~Larochelle, F.~Laviolette,
  M.~Marchand, and V.~Lempitsky.
\newblock Domain-adversarial training of neural networks.
\newblock \emph{Journal of Machine Learning Research}, 17\penalty0
  (1):\penalty0 2096--2030, 2016.

\bibitem[Goodfellow et~al.(2020)Goodfellow, Pouget-Abadie, Mirza, Xu,
  Warde-Farley, Ozair, Courville, and Bengio]{goodfellow2020generative}
I.~Goodfellow, J.~Pouget-Abadie, M.~Mirza, B.~Xu, D.~Warde-Farley, S.~Ozair,
  A.~Courville, and Y.~Bengio.
\newblock Generative adversarial networks.
\newblock \emph{Communications of the ACM}, 63\penalty0 (11):\penalty0
  139--144, 2020.

\bibitem[Goodfellow et~al.(2014)Goodfellow, Shlens, and
  Szegedy]{goodfellow2014explaining}
I.~J. Goodfellow, J.~Shlens, and C.~Szegedy.
\newblock Explaining and harnessing adversarial examples.
\newblock \emph{arXiv preprint arXiv:1412.6572}, 2014.

\bibitem[Guo et~al.(2022)Guo, Hur, Liang, and
  Ryan]{guoOnlineLearningTransport2022}
W.~Guo, Y.~Hur, T.~Liang, and C.~Ryan.
\newblock Online learning to transport via the minimal selection principle.
\newblock In \emph{Conference on Learning Theory}, volume 178 of
  \emph{Proceedings of machine learning research}, pages 4085--4109. PMLR, July
  2022.

\bibitem[Hu et~al.(2018)Hu, Niu, Sato, and Sugiyama]{hu2018does}
W.~Hu, G.~Niu, I.~Sato, and M.~Sugiyama.
\newblock Does distributionally robust supervised learning give robust
  classifiers?
\newblock In \emph{International Conference on Machine Learning}, pages
  2029--2037. PMLR, 2018.

\bibitem[Ilyas et~al.(2019)Ilyas, Santurkar, Tsipras, Engstrom, Tran, and
  Madry]{ilyas2019adversarial}
A.~Ilyas, S.~Santurkar, D.~Tsipras, L.~Engstrom, B.~Tran, and A.~Madry.
\newblock Adversarial examples are not bugs, they are features.
\newblock \emph{Advances in Neural Information Processing Systems}, 32, 2019.

\bibitem[Javanmard and Soltanolkotabi(2022)]{javanmard2022PreciseStatistical}
A.~Javanmard and M.~Soltanolkotabi.
\newblock Precise statistical analysis of classification accuracies for
  adversarial training.
\newblock \emph{The Annals of Statistics}, 50\penalty0 (4):\penalty0
  2127--2156, Aug. 2022.
\newblock ISSN 0090-5364, 2168-8966.
\newblock \doi{10.1214/22-AOS2180}.

\bibitem[Javanmard et~al.(2020)Javanmard, Soltanolkotabi, and
  Hassani]{javanmard2020precise}
A.~Javanmard, M.~Soltanolkotabi, and H.~Hassani.
\newblock Precise tradeoffs in adversarial training for linear regression.
\newblock In \emph{Conference on Learning Theory}, pages 2034--2078. PMLR,
  2020.

\bibitem[Kpotufe and Martinet(2018)]{kpotufe2018marginal}
S.~Kpotufe and G.~Martinet.
\newblock Marginal singularity, and the benefits of labels in covariate-shift.
\newblock In \emph{Conference On Learning Theory}, pages 1882--1886. PMLR,
  2018.

\bibitem[Liang(2021)]{liang2021HowWell}
T.~Liang.
\newblock How {Well} {Generative} {Adversarial} {Networks} {Learn}
  {Distributions}.
\newblock \emph{Journal of Machine Learning Research}, 22\penalty0
  (228):\penalty0 1--41, 2021.
\newblock ISSN 1533-7928.

\bibitem[Liang and Stokes(2019)]{liang2019interaction}
T.~Liang and J.~Stokes.
\newblock Interaction matters: A note on non-asymptotic local convergence of
  generative adversarial networks.
\newblock In \emph{International Conference on Artificial Intelligence and
  Statistics}, pages 907--915. PMLR, 2019.

\bibitem[Liang and Sur(2022)]{liang2022PreciseHighdimensional}
T.~Liang and P.~Sur.
\newblock A precise high-dimensional asymptotic theory for boosting and
  minimum-l1-norm interpolated classifiers.
\newblock \emph{The Annals of Statistics}, 50\penalty0 (3), June 2022.
\newblock ISSN 0090-5364.
\newblock \doi{10.1214/22-AOS2170}.

\bibitem[Madry et~al.(2017)Madry, Makelov, Schmidt, Tsipras, and
  Vladu]{madry2017towards}
A.~Madry, A.~Makelov, L.~Schmidt, D.~Tsipras, and A.~Vladu.
\newblock Towards deep learning models resistant to adversarial attacks.
\newblock \emph{arXiv preprint arXiv:1706.06083}, 2017.

\bibitem[Mansour et~al.(2009)Mansour, Mohri, and
  Rostamizadeh]{mansour2009domain}
Y.~Mansour, M.~Mohri, and A.~Rostamizadeh.
\newblock Domain adaptation: Learning bounds and algorithms.
\newblock \emph{arXiv preprint arXiv:0902.3430}, 2009.

\bibitem[Mohri and Mu{\~n}oz~Medina(2012)]{mohri2012new}
M.~Mohri and A.~Mu{\~n}oz~Medina.
\newblock New analysis and algorithm for learning with drifting distributions.
\newblock In \emph{Algorithmic Learning Theory}, pages 124--138. Springer,
  2012.

\bibitem[Mokhtari et~al.(2020)Mokhtari, Ozdaglar, and
  Pattathil]{mokhtari2020unified}
A.~Mokhtari, A.~Ozdaglar, and S.~Pattathil.
\newblock A unified analysis of extra-gradient and optimistic gradient methods
  for saddle point problems: Proximal point approach.
\newblock In \emph{International Conference on Artificial Intelligence and
  Statistics}, pages 1497--1507. PMLR, 2020.

\bibitem[Quinonero-Candela et~al.(2008)Quinonero-Candela, Sugiyama,
  Schwaighofer, and Lawrence]{quinonero2008dataset}
J.~Quinonero-Candela, M.~Sugiyama, A.~Schwaighofer, and N.~D. Lawrence.
\newblock \emph{Dataset shift in machine learning}.
\newblock Mit Press, 2008.

\bibitem[Ross and Doshi-Velez(2018)]{ross2018improving}
A.~Ross and F.~Doshi-Velez.
\newblock Improving the adversarial robustness and interpretability of deep
  neural networks by regularizing their input gradients.
\newblock In \emph{Proceedings of the AAAI Conference on Artificial
  Intelligence}, volume~32, 2018.

\bibitem[Shimodaira(2000)]{shimodaira2000improving}
H.~Shimodaira.
\newblock Improving predictive inference under covariate shift by weighting the
  log-likelihood function.
\newblock \emph{Journal of Statistical Planning and Inference}, 90\penalty0
  (2):\penalty0 227--244, 2000.

\bibitem[Sinha et~al.(2017)Sinha, Namkoong, Volpi, and
  Duchi]{sinha2017certifying}
A.~Sinha, H.~Namkoong, R.~Volpi, and J.~Duchi.
\newblock Certifying some distributional robustness with principled adversarial
  training.
\newblock \emph{arXiv preprint arXiv:1710.10571}, 2017.

\bibitem[Stone(1980)]{stone1980optimal}
C.~J. Stone.
\newblock Optimal rates of convergence for nonparametric estimators.
\newblock \emph{The Annals of Statistics}, pages 1348--1360, 1980.

\bibitem[Sugiyama and Mueller(2005)]{sugiyama2005generalization}
M.~Sugiyama and K.~Mueller.
\newblock Generalization error estimation under covariate shift.
\newblock In \emph{Workshop on Information-Based Induction Sciences}, pages
  21--26. Citeseer, 2005.

\bibitem[Sugiyama et~al.(2007)Sugiyama, Krauledat, and
  M{\"u}ller]{sugiyama2007covariate}
M.~Sugiyama, M.~Krauledat, and K.-R. M{\"u}ller.
\newblock Covariate shift adaptation by importance weighted cross validation.
\newblock \emph{Journal of Machine Learning Research}, 8\penalty0 (5), 2007.

\bibitem[Telgarsky(2013)]{telgarsky2013margins}
M.~Telgarsky.
\newblock Margins, shrinkage, and boosting.
\newblock In \emph{International Conference on Machine Learning}, pages
  307--315. PMLR, 2013.

\bibitem[Vapnik(1999)]{vapnik1999nature}
V.~Vapnik.
\newblock \emph{The nature of statistical learning theory}.
\newblock Springer science \& business media, 1999.

\bibitem[Villani(2021)]{villani2021topics}
C.~Villani.
\newblock \emph{Topics in optimal transportation}, volume~58.
\newblock American Mathematical Society, 2021.

\bibitem[Xing et~al.(2021)Xing, Zhang, and Cheng]{xing2021adversarially}
Y.~Xing, R.~Zhang, and G.~Cheng.
\newblock Adversarially robust estimate and risk analysis in linear regression.
\newblock In \emph{International Conference on Artificial Intelligence and
  Statistics}, pages 514--522. PMLR, 2021.

\bibitem[Xu et~al.(2009)Xu, Caramanis, and Mannor]{xu2009robustness}
H.~Xu, C.~Caramanis, and S.~Mannor.
\newblock Robustness and regularization of support vector machines.
\newblock \emph{Journal of Machine Learning Research}, 10\penalty0 (7), 2009.

\end{thebibliography}

\newpage
\appendix

\section{Proofs}
\label{sec:proofs}
\subsection{Proofs in Section~\ref{sec:blessing-curse}}
\begin{proof}[Proof of Theorem~\ref{thm:regression-main}]
	In the regression setting, the utility evaluated at delta measure $\delta_x$ takes the form
	\begin{align*}
		\cU(\theta, \delta_{x}) = \int_Y \ell(f_{\theta}(x), y)\dd{\pi^\star_{x}(y)} = \langle x, \theta^\star - \theta \rangle^2 + 1 \;.
	\end{align*}
	For each particle $x_0\sim \mu^{(0)}$, the adversarial distribution shift updates following the iteration
	\begin{align*}
		x_{t+1} &= x_{t} + \gamma \cdot \pdv{x} \cU(\theta^{(0)}, \delta_{x_t}) \;, \\
		&= \big[ I + 2\gamma (\theta^\star - \theta^{(0)}) (\theta^\star - \theta^{(0)})^\top \big] x_t \;,\\
		&= ( I + \tilde{\gamma} \Delta_{\mathrm{b}} \Delta_{\mathrm{b}}^\top ) x_t \;,
	\end{align*}
	where $\tilde{\gamma} := 2\gamma \| \theta^\star - \theta^{(0)} \|^2$. Therefore
	\begin{align*}
		x_{T}=  (1+\tilde{\gamma})^T \langle x_{0}, \Delta_{\mathrm{b}} \rangle \Delta_{\mathrm{b}} + (I - \Delta_{\mathrm{b}} \Delta_{\mathrm{b}}^\top) x_{0} \;.
	\end{align*}
	It is then clear to see that
	\begin{align*}
		\langle x_T, \Delta_{\mathrm{b}} \rangle &= (1+\tilde{\gamma})^T \langle x_{0}, \Delta_{\mathrm{b}} \rangle \;,\\
		\| x_T \| &= \left[ (1+\tilde{\gamma})^{2T} \langle x_{0}, \Delta_{\mathrm{b}} \rangle^2 +  \langle x_{0}, (I - \Delta_{\mathrm{b}} \Delta_{\mathrm{b}}^\top) x_{0} \rangle \right]^{1/2} \;,
	\end{align*}
	and thus, we can conclude the proof by noting
	\begin{align*}
		\left| \frac{\langle x_T, \Delta_{\mathrm{b}} \rangle}{\| x_T \|} \right|  = \frac{1}{\left[ 1 +  \tfrac{\langle x_{0}, (I - \Delta_{\mathrm{b}} \Delta_{\mathrm{b}}^\top) x_{0} \rangle }{(1+\tilde{\gamma})^{2T} \langle x_{0}, \Delta_{\mathrm{b}} \rangle^2}\right]^{1/2}} \;.
	\end{align*}
\end{proof}

\begin{proof}[Proof of Theorem~\ref{thm:classification-main}]
	Let $\sigma(z) = 1/(1+e^{-z})$ be the sigmoid function and $\sigma'(z)=\sigma(z)(1-\sigma(z))$. 
	In the classification setting, 
	\begin{align*}
		\cU(\theta, \delta_{x}) = \int_Y \ell(f_{\theta}(x), y)\dd{\pi^\star_{x}(y)} = \frac{1}{1+e^{-\langle x, \theta^\star\rangle}}\log(1+ e^{-\langle x, \theta\rangle}) + \frac{1}{1+e^{\langle x, \theta^\star\rangle}}\log(1+ e^{\langle x, \theta\rangle}) \;.
	\end{align*}
	For each particle $x_0\sim \mu^{(0)}$, the adversarial distribution shift reads
	\begin{align*}
		x_{t+1} &= x_{t} + \gamma \cdot \pdv{x} \cU(\theta^{(0)}, \delta_{x_t}) \;, \\
		&=x_{t} + \gamma \cdot \left[ -\sigma(\langle x_{t}, \theta^\star \rangle) \big(1 - \sigma(\langle x_{t}, \theta^\star \rangle)\big) \langle x_t, \theta^{(0)} \rangle \cdot \theta^\star + \big( \sigma(\langle x_{t},  \theta^{(0)} \rangle) - \sigma(\langle x_{t}, \theta^\star \rangle)\big) \cdot \theta^{(0)} \right] \;.
	\end{align*}
	 For all $t\geq 0$, define a sequence of real values $(a_t, b_t) \in \R^2$
	\begin{align*}
		a_{t} := \langle x_t, \theta^{(0)} \rangle \;, ~~b_{t} := \langle x_t, \theta^\star - \theta^{(0)}\rangle \;.
	\end{align*}
	Let $\eta := \gamma \| \theta^{(0)} \|^2 >0$. 
	Then the recursive relationship on $\{a_t, b_t\}$ induced by the dynamics reads
	\begin{align*}
		a_{t+1} &= a_{t} - \eta \cdot (1+\sqrt{r}q)\sigma'(a_{t}+b_{t}) a_{t} + \eta \cdot \big( \sigma(a_{t}) - \sigma(a_{t}+b_{t})\big) \;, \\
		b_{t+1} &= b_{t} - \eta \cdot (\sqrt{r}q+r)\sigma'(a_{t}+b_{t}) a_{t} + \eta \cdot \sqrt{r}q \big( \sigma(a_{t}) - \sigma(a_{t}+b_{t})\big) \;,
	\end{align*}
	where $q := \left\langle \frac{\theta^\star - \theta^{(0)}}{\| \theta^\star - \theta^{(0)} \|}  , \frac{\theta^{(0)}}{\|  \theta^{(0)} \|} \right\rangle$ and $r:= \frac{\| \theta^\star - \theta^{(0)} \|^2}{\| \theta^{(0)} \|^2}$. 
	To study the directional convergence, we need sharp characterizations of the sequence $\{a_t, b_t\}$ following the above non-linear, non-convex updates. The main technical hurdle is to derive a precise estimate on the following quantity, done in Lemma~\ref{lem:nonlinear-recur}, 
	\begin{align*}
		\left| \frac{a_T+ b_T}{a_T} \right| = O\left(\frac{\log(T)}{T} \right) \;.
	\end{align*}
	To apply Lemma~\ref{lem:nonlinear-recur}, we note the assumption $\theta^{(0)} \perp \theta^\star - \theta^{(0)}$, which in turn simplifies the dynamics with $q = 0$.
	
	Recall that the direction $\Delta_{\mathrm{c}}$ can be written as
	\begin{align*}
		\Delta_{\mathrm{c}} = -\frac{1}{\sqrt{1+r}} \cdot \frac{\theta^\star - \theta^{(0)}}{\| \theta^\star - \theta^{(0)} \|} + \frac{\sqrt{r}}{\sqrt{1+r}}  \cdot \frac{\theta^{(0)}}{\|  \theta^{(0)} \|} \;.
	\end{align*}
	Therefore
	\begin{align*}
		\langle x_T, \Delta_{\mathrm{c}} \rangle &= -\frac{1}{\sqrt{1+r}} \cdot \frac{b_{T}}{\| \theta^\star - \theta^{(0)} \|} + \frac{\sqrt{r}}{\sqrt{1+r}}  \cdot \frac{a_{T}}{\|  \theta^{(0)} \|} \; \\
		\| x_T \| &= \left[ \frac{b^2_{T}}{\| \theta^\star - \theta^{(0)} \|^2} + \frac{a^2_{T}}{\|  \theta^{(0)} \|^2 } +  \big\langle x_0, \Pi^\perp_{\{ \theta^{(0)},\theta^\star - \theta^{(0)}\}}  x_0\big\rangle\right]^{1/2} \;,
	\end{align*}
	and thus we conclude the proof recalling Lemma~\ref{lem:nonlinear-recur}
	\begin{align*}
		 \frac{\langle x_T, \Delta_{\mathrm{c}} \rangle}{\| x_T \|} &= \frac{(1+r) a_T - (a_T +b_T)}{\sqrt{1+r}\sqrt{b_T^2 + r a_T^2 + \| \theta^\star - \theta^{(0)} \|^2  \big\langle x_0, \Pi^\perp_{\{ \theta^{(0)},\theta^\star - \theta^{(0)}\}}  x_0\big\rangle}} \\
		 &= \frac{1 - \frac{1}{1+r} \frac{a_T+b_T}{a_T}}{\sqrt{1-\frac{2}{1+r} \frac{a_T+b_T}{a_T} + \frac{1}{1+r}\big[ \big(  \frac{a_T+b_T}{a_T} \big)^2 + \frac{\langle x_0, \Pi^\perp_{\{ \theta^{(0)},\theta^\star - \theta^{(0)}\}}  x_0\rangle}{a_T^2} \big]}} \;.
	\end{align*}
\end{proof}

\begin{lemma}[Nonlinear recursions]
	\label{lem:nonlinear-recur}
	Let $\sigma(z) = 1/(1+e^{-z})$ be the sigmoid function and $\sigma'(z)=\sigma(z)(1-\sigma(z))$. Define the nonlinear recursion for fixed $r,\eta>0$, with $a_0>0$ and $b_0 = 0$
	\begin{align}
		a_{t+1} &= a_{t} - \eta \cdot \sigma'(a_{t}+b_{t}) a_{t} + \eta \cdot \big( \sigma(a_{t}) - \sigma(a_{t}+b_{t})\big) \;, \label{eqn:a_t} \\
		b_{t+1} &= b_{t} - \eta \cdot r\sigma'(a_{t}+b_{t}) a_{t} \;.\label{eqn:b_t} 
	\end{align}
	Assume there exists some $t_0 \in \mathbb{N}$,  such that $(a_{t_0}, b_{t_0})$ satisfy Assumption~\ref{asmp:init-condition} with $r>0, c >1$. Then as $T \rightarrow \infty$, $a_T + b_T \rightarrow -\infty$ and $a_T \rightarrow +\infty$ and
	\begin{align*}
		 |a_T+b_T| &= O(\log(T)) \;, \\
		 a_T &= \Theta(T) \;.
	\end{align*}
\end{lemma}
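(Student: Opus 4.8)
The goal is to analyze the nonlinear recursion \eqref{eqn:a_t}--\eqref{eqn:b_t} past the time $t_0$ at which the initial condition (Assumption~\ref{asmp:init-condition}) holds, and to extract the asymptotics $a_T+b_T\to-\infty$ with $|a_T+b_T|=O(\log T)$ and $a_T\to+\infty$ with $a_T=\Theta(T)$. My plan is to run everything through the Lyapunov-type ratio $L_t := \sigma'(a_t+b_t)a_t/(\sigma(a_t)-\sigma(a_t+b_t))$ described in Section~\ref{sec:intuition}, but \emph{not} to work with $L_t$ directly (it is too unwieldy across one step of the recursion); instead I will install the two envelopes $L^{\rm env\text{-}U}_t = e^{a_t+b_t}a_t/(1-e^{2(a_t+b_t)})$ and $L^{\rm env\text{-}L}_t = e^{a_t+b_t}a_t/(1+e^{a_t+b_t})$ and show, using Lemma~\ref{lem:helper-function} and the recursive estimates of Lemma~\ref{lem:recur-estimate}, that for all $t\ge t_0$ the quantity $L_t$ is trapped in $(\tfrac{1}{1+r},1)$, with $\tfrac{1}{1+r}$ acting as a stable fixed point. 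This trapping is the load-bearing fact: $1-L_t>0$ forces $a_{t+1}<a_t+\eta(\sigma(a_t)-\sigma(a_t+b_t))$ to be compatible with monotone growth in the right places, and $L_t$ bounded below by $\tfrac{1}{1+r}$ prevents the correction terms from overwhelming the drift.

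\textbf{Step 1: monotonicity.} First I would establish, for $t\ge t_0$, that $a_t+b_t$ is strictly decreasing and $a_t$ strictly increasing. From \eqref{eqn:b_t}, $b_{t+1}-b_t = -\eta r\,\sigma'(a_t+b_t)a_t<0$ as long as $a_t>0$; combined with a sign analysis of the $a$-update (the term $\sigma(a_t)-\sigma(a_t+b_t)>0$ whenever $b_t<0$, which we already know holds initially since $a_0+b_0<0$ and $a_0>0$ forces $b_0=0<$... — more precisely I will carry the inductive hypothesis $a_t+b_t<0<a_t$), one gets $a_t+b_t\downarrow$. That $a_t\uparrow$ requires showing the negative correction $-\eta\sigma'(a_t+b_t)a_t$ is dominated by $\eta(\sigma(a_t)-\sigma(a_t+b_t))$, which is exactly the statement $L_t<1$. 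So Steps 1 and the $L_t$-trapping are intertwined and should be proved by a single simultaneous induction on $t$, carrying the package: (i) $a_t+b_t<0<a_t$, (ii) $L^{\rm env\text{-}U}_t<1$, (iii) $L^{\rm env\text{-}L}_t>\tfrac{1+a_t^{-1}}{1+r+a_t^{-1}}$, (iv) $a_t+b_t$ decreasing, $a_t$ increasing.

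\textbf{Step 2: the two asymptotic rates.} Once $a_t+b_t\downarrow-\infty$ and $a_t\uparrow+\infty$ are known with the ratio $\big|\tfrac{a_t+b_t}{a_t}\big|$ controlled, I would read off the rates. Since $a_t\uparrow$, either $a_t\to\infty$ or $a_t\to$ a finite limit; ruling out the finite limit uses that $\sigma(a_t)-\sigma(a_t+b_t)$ stays bounded below by a positive constant along the trajectory (because $a_t+b_t\to-\infty$ makes $\sigma(a_t+b_t)\to0$, while $\sigma(a_t)$ is bounded below once $a_t$ has crossed a fixed level), so $a_{t+1}-a_t$ is eventually bounded below and above by positive constants — giving $a_T=\Theta(T)$. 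For $b$: summing \eqref{eqn:b_t}, $b_T = -\eta r\sum_{t<T}\sigma'(a_t+b_t)a_t = -\eta r\sum_{t<T}L_t\big(\sigma(a_t)-\sigma(a_t+b_t)\big)$; with $L_t\in(\tfrac{1}{1+r},1)$ and the $a$-increments summing telescopically, $a_T+b_T = a_0+b_0 + \sum_{t<T}\big[(1-r L_t)(\sigma(a_t)-\sigma(a_t+b_t)) - L_t(\sigma(a_t)-\sigma(a_t+b_t))\cdot(\text{stuff})\big]$ — cleaner is to note $a_{t+1}+b_{t+1}-(a_t+b_t) = \eta\big[(\sigma(a_t)-\sigma(a_t+b_t))-(1+r)\sigma'(a_t+b_t)a_t\big] = \eta(\sigma(a_t)-\sigma(a_t+b_t))\big(1-(1+r)L_t\big)$, which is negative and, because $L_t\to\tfrac{1}{1+r}$, the factor $1-(1+r)L_t\to0$; quantifying the rate of $L_t\to\tfrac{1}{1+r}$ from Lemma~\ref{lem:recur-estimate} and using $a_t=\Theta(t)$, $\sigma(a_t+b_t)\approx e^{a_t+b_t}$ small, one shows the increments of $a_t+b_t$ are $\Theta(e^{a_t+b_t})\cdot\Theta(1)$ summing to something that drives $a_t+b_t$ to $-\infty$ only logarithmically, i.e. $|a_T+b_T|=O(\log T)$. (This is precisely the self-consistent ODE picture $\dot u \approx -c e^{u}$ with solution $u(t)\sim-\log t$.)

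\textbf{Main obstacle.} The hard part is Step 1 — specifically the inductive propagation of clause (iii), the lower-envelope bound $L^{\rm env\text{-}L}_{t+1}>\tfrac{1+a_{t+1}^{-1}}{1+r+a_{t+1}^{-1}}$, which is where the "$\tfrac{1}{1+r}$ is a stable fixed point" phenomenon has to be extracted from the raw recursion. This is the content of Lemma~\ref{lem:recur-estimate}, and it requires a delicate comparison: one must show that if $L^{\rm env\text{-}L}_t$ exceeds the moving threshold $\tfrac{1+a_t^{-1}}{1+r+a_t^{-1}}$, then the decrease of $a_t+b_t$ (which would tend to shrink $e^{a_t+b_t}a_t$) is compensated by the increase of $a_t$, keeping $L^{\rm env\text{-}L}$ above the (also slightly moving, since $a_{t+1}>a_t$ lowers the threshold) target. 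The non-sandwiching issue — we only have $L^{\rm env\text{-}L}_t\le\min\{L_t,L^{\rm env\text{-}U}_t\}$, not a two-sided envelope — means the upper and lower controls must be run as two separate one-sided inductions (Lemma~\ref{lem:helper-function} bridges each envelope to $L_t$), and checking that Assumption~\ref{asmp:init-condition} at $t_0$ is exactly what is needed to seed both inductions is the final bookkeeping step. Everything downstream (Step 2, and then plugging into the displayed formula for $\langle x_T,\Delta_{\mathrm c}\rangle/\|x_T\|$ in the proof of Theorem~\ref{thm:classification-main}) is routine asymptotic calculus once the $L_t$-trapping is in hand.
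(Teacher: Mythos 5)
Your plan of attack correctly identifies the machinery — the envelope functions, the role of Lemmas~\ref{lem:helper-function} and \ref{lem:recur-estimate}, the simultaneous induction establishing monotonicity of $a_t\uparrow$ and $a_t+b_t\downarrow$, and the trapping $L_t\in\big[\tfrac{1+1/a_t}{1+r+1/a_t},\,1\big)$. Step~1 is essentially the content of Lemma~\ref{lem:key-limit}, and your account of it matches the paper. The gap is in Step~2, specifically the derivation of $|a_T+b_T|=O(\log T)$. You write that one should ``quantify the rate of $L_t\to\tfrac{1}{1+r}$ from Lemma~\ref{lem:recur-estimate}'' and conclude that the increments of $a_t+b_t$ are $\Theta(e^{a_t+b_t})$. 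But Lemma~\ref{lem:recur-estimate} gives no rate: it only propagates the one-step invariance of the envelope bounds, which keeps $L_t$ inside a fixed interval $\big[\tfrac{1}{1+r}+o(1),\,\tfrac{1}{1+\epsilon}\big]$ without forcing convergence to the lower endpoint. Consequently $(1+r)L_t-1$ is a priori only bounded in $\big[\Omega(1/a_t),\,\tfrac{r-\epsilon}{1+\epsilon}\big]$, and the summation identity $a_T+b_T-(a_{t_0}+b_{t_0})=\eta\sum_{t}(\sigma(a_t)-\sigma(a_t+b_t))\big(1-(1+r)L_t\big)$ yields only $|a_T+b_T|\in[\Omega(\log T),\,O(T)]$, not the desired $O(\log T)$ upper bound. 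The ODE picture $\dot u\approx -ce^u$ is the right intuition, but rigorizing it via the increment formula would require precisely the unproved statement $(1+r)L_t-1=\Theta(e^{a_t+b_t})$.

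The paper's actual argument is a pointwise bound and avoids this circularity. It first shows (by summing the exact recursion $ra_{t+1}-b_{t+1}=ra_t-b_t+\eta r(\sigma(a_t)-\sigma(a_t+b_t))$, whose summand $\to1$) that $ra_t-b_t=\Theta(t)$. Then the \emph{lower} envelope bound gives $\sigma'(a_t+b_t)a_t=L_t\,(\sigma(a_t)-\sigma(a_t+b_t))\ge\tfrac{1}{1+r}-o(1)$, hence $\sigma'(a_t+b_t)(ra_t-b_t)\ge r\sigma'(a_t+b_t)a_t=\Omega(1)$. Combining with $ra_t-b_t=\Theta(t)$ gives $\sigma'(a_t+b_t)=\Omega(1/t)$, and since $\sigma'(z)\le e^{z}$ for $z<0$ this forces $e^{a_t+b_t}=\Omega(1/t)$, i.e.\ $a_t+b_t\ge-\log t-O(1)$. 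Only then does $a_t=\tfrac{(ra_t-b_t)+(a_t+b_t)}{1+r}=\Theta(t)$ fall out. So the $O(\log T)$ bound is extracted from the lower bound on $L_t$ alone, with no need to track a convergence rate of $L_t$; your argument should be reorganized around this pointwise estimate rather than the ODE-style accumulation of increments.
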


\begin{proof}[Proof of Lemma~\ref{lem:nonlinear-recur}]
	$a_{t_0}$ and $b_{t_0}$ satisfy the conditions in Lemma~\ref{lem:key-limit}, therefore we know $a_{t+1}>a_{t}>0$, and $a_{t+1}+b_{t+1} < a_{t} + b_{t} < 0$ for $t \geq t_0$. Given the monotonicity, the proof proceeds in the following steps. First, note that
\begin{align}
		ra_{t+1} - b_{t+1} &= r a_t- b_{t} +  \eta \cdot r \big( \sigma(a_t) - \sigma(a_t+b_t) \big) \label{eqn:exact-key-1} \;,\\
		a_{t+1}+b_{t+1} &= \big(1 - \eta \sigma'(a_t+b_t)\big) (a_t +b_t) - \eta \sigma'(a_t+b_t) (ra_t - b_t) + \eta \big( \sigma(a_t) - \sigma(a_t+b_t) \big) \label{eqn:exact-key-2} \;.
	\end{align} 
	Below we use the Bachmann–Landau notation: we say two sequences of positive real numbers $x_t = O(z_t)$ iff $\limsup_{t\rightarrow \infty} x_t/z_t < \infty$, $x_t = \Omega(z_t)$ iff $\liminf_{t\rightarrow \infty} x_t/z_t > 0$, and $x_t = \Theta(z_t)$ iff $x_t = O(z_t)$ and $x_t = \Omega(z_t)$.
	\begin{enumerate}
		\item Observe that $a_t > 0$, $b_t \leq 0$ and $b_{t+1} < b_{t}$ monotonic decreasing. The proof is straightforward from induction observing the form of \eqref{eqn:a_t}-\eqref{eqn:b_t}.

		\item Claim $ra_t - b_t \rightarrow +\infty$. Proof by contradiction. If the monotonic sequence $ra_t - b_t$ is uniformly upper bounded by $M$. Then $ \sigma(a_t) - \sigma(a_t+b_1) \leq  \sigma(a_t) - \sigma(a_t+b_t)  = o(1)$, which implies $a_t \rightarrow \infty$, and therefore contradicts $r a_t \leq r a_t - b_t \leq M$.

		\item Claim $b_t \rightarrow -\infty$. Proof by contradiction. If $b_t$ is bounded below by $-M$, then $\eta \cdot r \sigma'(a_t+b_t) a_t \rightarrow 0$. Note $a_t \geq a_{t_0} >0$, then $\sigma'(a_t+b_t) \rightarrow 0$. Recall that $a_t+b_t \leq a_{t_0} + b_{t_0} <0$ and is monotonic decreasing, then $a_t+b_t \rightarrow -\infty$ and thus $b_t$ must go to negative infinity. We therefore reach a contradiction.

		\item Claim $a_t \rightarrow +\infty$. Proof by contradiction, if $a_t$ bounded by above by $M$, then $a_t+b_t \rightarrow -\infty$, then $a_{t+1} - a_t \geq \eta \big(\frac{1}{2} - \sigma(a_t +b_t) \big) - \eta \sigma'(a_t + b_t) M \geq  \eta \big(\frac{1}{2} - (M+1)\sigma(a_t +b_t) \big) \geq \frac{1}{4}\eta$ for $t$ large enough. We thus reach a contradiction.

		\item Claim $a_t + b_t \rightarrow -\infty$. If $|a_t +b_t| \leq M$ for some absolute constant $M$, then \eqref{eqn:exact-key-2} is a contradiction as we have proved $ra_t - b_t \rightarrow +\infty$.

		\item Claim $\lim\limits_{t \rightarrow \infty} \frac{ra_t - b_t}{t} = \eta r$. We have shown that $\lim_{t\rightarrow \infty} a_{t} \rightarrow \infty$ and $\lim_{t \rightarrow \infty} a_{t} + b_{t} = -\infty$.
	\begin{align*}
		\lim_{t\rightarrow \infty} ~\sigma(a_t) - \sigma(a_t+b_t)  = 1 \;.
	\end{align*} 
	Therefore by \eqref{eqn:exact-key-1} we can show $\lim\limits_{t \rightarrow \infty} \frac{ra_t - b_t}{t} = \eta r$.
		
		\item Claim $\liminf\limits_{t \rightarrow \infty}~ \sigma'(a_t+b_t) (ra_t - b_t) \geq \liminf\limits_{t \rightarrow \infty}~ r\sigma'(a_t+b_t) a_t \geq \frac{r}{1+r}$. Here the last inequality uses the fact $\frac{\sigma'(a_t+b_t) a_t}{\sigma(a_t) - \sigma(a_t+b_t)} \in \left[ \frac{1+\frac{1}{a_t}}{(1+r)+\frac{1}{a_t}}, 1 \right)$, established in Lemma~\ref{lem:key-limit}. 
		
		\item Claim $\limsup\limits_{t\rightarrow \infty} \frac{|a_t+ b_t|}{\log (t)} \leq 1$. This fact is immediate because of $\sigma'(a_t+b_t) (ra_t - b_t) =\Omega(1)$ and $ra_t - b_t = \Theta(t)$.
		
		\item Claim $a_t = \frac{ra_t -b_t }{r+1} + \frac{a_t+b_t}{r+1} = \Theta(t) - O(\log(t)) = \Theta(t)$. Similarly, we can show $-b_t = \Theta(t)$.
	\end{enumerate}
\end{proof}

\subsection{Technical Lemmas for Section~\ref{sec:blessing-curse} }
This section contains the technical building blocks for proofs in Section~\ref{sec:blessing-curse}. Recall $\sigma(z) = 1/(1+e^{-z})$ is the sigmoid function and $\sigma'(z)=\sigma(z)(1-\sigma(z))$.
\begin{lemma}
	\label{lem:helper-function}
	Define $G_{\delta}(x, z) : = -(1+\delta) \sigma'(z) x + \sigma(x) - \sigma(z)$, for $\delta\geq 0$, $z< 0$ and $x \geq 0$. Then,
	\begin{itemize}
		% \item Fix $z < 0$ and let $\delta=0$. If $ x < e^{-z} - e^{z}$, then $G_{0}(x, z) > 0$.
		\item Fix $z<0$ and $\delta\in [0, \infty)$. If $x > \frac{1}{1+\delta}(e^{-z} + 1)$, then $G_{\delta}(x, z) < 0$.
		\item Fix $z<0$ and $\delta \in [0, 1]$, If $x < \frac{1}{1+\delta} (e^{-z} - e^{z})$, then $G_{\delta}(x, z) > 0$.
		% \item Fix $z>0$ and $\delta\in [0, \infty)$. If $x>0$, then $G_{\delta}(x, z) < 0$.
	\end{itemize}
\end{lemma}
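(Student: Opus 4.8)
The plan is to fix $z<0$ and treat $G_{\delta}(\cdot,z)$ as a one-variable function of $x\ge 0$, exploiting that it is concave there, together with two bookkeeping identities obtained by rewriting $\sigma'(z)$. Since $\sigma(z)=1/(1+e^{-z})$ and $1-\sigma(z)=e^{-z}/(1+e^{-z})$, we have $\sigma'(z)=e^{-z}/(1+e^{-z})^2$, and hence $\sigma'(z)(1+e^{-z})=e^{-z}/(1+e^{-z})=1-\sigma(z)$, while $\sigma'(z)(e^{-z}-e^{z})=(e^{-2z}-1)/(1+e^{-z})^2=(e^{-z}-1)/(e^{-z}+1)=1-2\sigma(z)$. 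I would record these first; they are what make the two threshold values in the statement exactly the tipping points of the relevant inequalities.

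For the first bullet I would argue directly, with no concavity needed. For any finite $x$ we have $\sigma(x)<1$, so $\sigma(x)-\sigma(z)<1-\sigma(z)=\sigma'(z)(1+e^{-z})$. The hypothesis $x>\tfrac{1}{1+\delta}(e^{-z}+1)$ gives $(1+\delta)\sigma'(z)x>\sigma'(z)(e^{-z}+1)=1-\sigma(z)$. Combining the two, $(1+\delta)\sigma'(z)x>\sigma(x)-\sigma(z)$, i.e.\ $G_{\delta}(x,z)<0$. Only $\delta\ge 0$ is used here, matching the stated range.

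For the second bullet I would use concavity. On $[0,\infty)$, $\sigma''=\sigma'(1-2\sigma)\le 0$, so $x\mapsto G_{\delta}(x,z)$ is concave on $[0,\infty)$ (the other terms in $x$ are affine), and $G_{\delta}(0,z)=\tfrac12-\sigma(z)>0$ because $z<0$. Set $x_1:=\tfrac{1}{1+\delta}(e^{-z}-e^{z})>0$. Using the second identity, $(1+\delta)\sigma'(z)x_1=\sigma'(z)(e^{-z}-e^{z})=1-2\sigma(z)$, hence $G_{\delta}(x_1,z)=\sigma(x_1)-\sigma(z)-(1-2\sigma(z))=\sigma(x_1)-\sigma(-z)$. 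Since $\delta\le 1$ and $\sinh w\ge w$ for $w\ge 0$, taking $w=-z>0$ gives $x_1\ge \tfrac12(e^{-z}-e^{z})=\sinh(-z)\ge -z$, so $\sigma(x_1)\ge\sigma(-z)$ and therefore $G_{\delta}(x_1,z)\ge 0$. Finally, by concavity, for any $0\le x<x_1$, writing $x=(1-t)\cdot 0+t\,x_1$ with $t=x/x_1\in[0,1)$, we get $G_{\delta}(x,z)\ge(1-t)G_{\delta}(0,z)+t\,G_{\delta}(x_1,z)\ge(1-t)G_{\delta}(0,z)>0$.

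There is no deep obstacle; the one place needing care — the ``hard part'' of this otherwise routine lemma — is noticing the algebraic simplifications $\sigma'(z)(1+e^{-z})=1-\sigma(z)$ and $\sigma'(z)(e^{-z}-e^{z})=1-2\sigma(z)$, since without them the comparison at the endpoints looks opaque, and recalling that $\sigma$ is concave only on $[0,\infty)$ (not globally), which is precisely the regime the hypothesis $x\ge 0$ places us in.
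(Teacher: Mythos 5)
Your proof is correct, and while it reaches the same endpoints as the paper's, it is organized differently in a way worth noting. For the first bullet, the paper invokes concavity of $G_\delta(\cdot,z)$ on $[0,\infty)$ and the resulting single-crossing structure, bounds the zero $x_0$ from above via $\sigma(x_0)-\sigma(z)\le 1-\sigma(z)$, and then (implicitly) concludes negativity to the right of $x_0$. You bypass the qualitative crossing argument entirely: the bound $\sigma(x)-\sigma(z)<1-\sigma(z)=\sigma'(z)(1+e^{-z})$ together with the hypothesis $(1+\delta)\sigma'(z)x>\sigma'(z)(1+e^{-z})$ gives $G_\delta(x,z)<0$ in one line, with no concavity needed. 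For the second bullet, the paper evaluates $G_\delta$ at the point $-z$, verifies $G_\delta(-z,z)\ge 0$ for $\delta\le 1$ via $\inf_{z\le 0}\frac{\sigma(-z)-\sigma(z)}{-\sigma'(z)z}\ge 2$, deduces $x_0\ge -z$, and then pushes through $\sigma(x_0)\ge\sigma(-z)$ to obtain the lower bound $x_0\ge\frac{1}{1+\delta}(e^{-z}-e^z)$; you instead evaluate $G_\delta$ directly at the threshold $x_1:=\frac{1}{1+\delta}(e^{-z}-e^z)$, using the tidy identity $\sigma'(z)(e^{-z}-e^z)=1-2\sigma(z)$ to simplify to $G_\delta(x_1,z)=\sigma(x_1)-\sigma(-z)$, and then reduce to $x_1\ge -z$ via $\sinh(-z)\ge -z$. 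Both routes ultimately prove the same comparison $x_1\ge -z$, but yours makes the algebraic simplification transparent and localizes the use of concavity to the single place it is actually needed (the interpolation step for $0\le x<x_1$). Making explicit the identities $\sigma'(z)(1+e^{-z})=1-\sigma(z)$ and $\sigma'(z)(e^{-z}-e^z)=1-2\sigma(z)$ is a genuine improvement in readability over the paper's version, which leaves these computations implicit.
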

\begin{proof}[Proof of Lemma~\ref{lem:helper-function}]
	$G_{\delta}(x, z)$ is a concave function in $x$ for $x \geq 0$ that only crosses the $x$-axis once. Denote $x_0>0$ to be the unique solution to $G_{\delta}(x_0, z) = 0$ for a fixed $z<0$, then
	\begin{align*}
		(1+\delta) \sigma'(z) x_0 = \sigma(x_0) - \sigma(z) \leq 1 - \sigma(z) \;.
	\end{align*}
	We know $x_0 \leq \frac{1}{1+\delta} \frac{1-\sigma(z)}{\sigma'(z)} $ and first claim is thus established.
	
	The second claim follows as
	\begin{align*}
		G_{\delta}(-z, z) \geq 0\;, ~\forall 0\leq \delta \leq 1 \;.
	\end{align*}
	This is because $\inf_{z \leq 0} \frac{\sigma(-z) - \sigma(z)}{-\sigma'(z) z} \geq 2 > 1+\delta$.
	Denote $x_0>0$ to be the unique solution to $G_{\delta}(x_0, z) = 0$, we just proved $x_0 > -z$ and therefore
	\begin{align*}
	 \sigma(-z) - \sigma(z) \leq  \sigma(x_0) - \sigma(z) = (1+\delta) \sigma(z)[1-\sigma(z)] x_0 \;.
	\end{align*}
	Therefore $x_0 > \frac{1}{1+\delta} \frac{\sigma(-z) - \sigma(z)}{\sigma'(z)}$. Rearrange the terms, we have proved $x_0 > \frac{1}{1+\delta} (e^{-z} - e^{z})$. 
\end{proof}

\begin{lemma}
	\label{lem:key-limit}
	Fix $r>0$ and $\eta<1/2(2+r)$. Consider the recursive relationship defined in Lemma~\ref{lem:nonlinear-recur}.
	Assume there exists some $t_0 \in \mathbb{N}$,  such that $(a_{t_0}, b_{t_0})$ satisfy Assumption~\ref{asmp:init-condition} with $r>0$ and $c>1$.
	 Then for $t \geq t_0$, we have
	\begin{itemize}
		\item $a_{t+1}>a_{t}$ is monotonic increasing ;
		\item $a_{t+1} + b_{t+1} < a_t + b_t$ is monotonic decreasing ;
	\end{itemize}
	and 
	\begin{align*}
		 \frac{\sigma'(a_t+b_t) a_t}{\sigma(a_t) - \sigma(a_t+b_t)} \in \left[ \frac{1+\frac{1}{a_t}}{(1+r)+\frac{1}{a_t}}, 1 \right) \;,~ \forall t \geq t_0 \;.
	\end{align*}
\end{lemma}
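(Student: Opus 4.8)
The plan is to establish the conclusions of the lemma — the two monotonicity statements and the two-sided bound on $L_t := \frac{\sigma'(a_t+b_t)a_t}{\sigma(a_t)-\sigma(a_t+b_t)}$ — simultaneously by induction on $t \ge t_0$. Write $z_t := a_t + b_t$ and recall the helper function $G_\delta(x,z) = -(1+\delta)\sigma'(z)x + \sigma(x) - \sigma(z)$ from Lemma~\ref{lem:helper-function}; the recursions \eqref{eqn:a_t}--\eqref{eqn:b_t} then read compactly as $a_{t+1}-a_t = \eta\,G_0(a_t,z_t)$ and $z_{t+1}-z_t = \eta\,G_r(a_t,z_t)$. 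The induction will carry four invariants: (A) $z_t < 0$; (B) $a_t \ge a_{t_0} > c$; (C) $L^{\rm env-U}_t := \frac{e^{z_t}a_t}{1-e^{2z_t}} < 1$; and (D) $L^{\rm env-L}_t := \frac{e^{z_t}a_t}{1+e^{z_t}} \ge \frac{1+a_t^{-1}}{(1+r)+a_t^{-1}}$. For $t = t_0$ these are exactly Assumption~\ref{asmp:init-condition}: (A) and (B) are its last two clauses, (C) is \eqref{eqn:assump1}, and (D) is \eqref{eqn:assump2}.

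Given the invariants at time $t$, I would first read off the two monotonicity statements and propagate (A), (B). The bound $L^{\rm env-U}_t < 1$ is literally $a_t < e^{-z_t}-e^{z_t}$, the hypothesis of the second bullet of Lemma~\ref{lem:helper-function} with $\delta=0$; hence $G_0(a_t,z_t)>0$ and $a_{t+1} > a_t$, so (B) persists. Likewise $L^{\rm env-L}_t \ge \frac{1+a_t^{-1}}{(1+r)+a_t^{-1}} > \frac{1}{1+r}$ rearranges to $a_t > \frac{1}{1+r}(e^{-z_t}+1)$, the hypothesis of the first bullet with $\delta=r$; hence $G_r(a_t,z_t) < 0$ and $z_{t+1} < z_t < 0$, so (A) persists. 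Thus $a_{t+1}>a_t$ and $a_{t+1}+b_{t+1} < a_t+b_t$ for all $t \ge t_0$. For the two-sided bound on $L_t$, the upper bound $L_t < 1$ is just a restatement of $G_0(a_t,z_t) > 0$, already obtained from (C). For the lower bound I would record the elementary inequality $L^{\rm env-L}_t \le L_t$, valid for every $a_t \ge 0 > z_t$: using $\frac{e^{z}}{1+e^{z}} = \sigma(z)$ and $\sigma'(z) = \sigma(z)(1-\sigma(z))$, the claim reduces, after cancelling $a_t\sigma(z_t) > 0$, to $\sigma(a_t)-\sigma(z_t) \le 1-\sigma(z_t)$, i.e.\ $\sigma(a_t)\le 1$. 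Combining with (D), $L_t \ge L^{\rm env-L}_t \ge \frac{1+a_t^{-1}}{(1+r)+a_t^{-1}}$, and since $\frac{1+a_t^{-1}}{(1+r)+a_t^{-1}}\in(\frac{1}{1+r},1)$ this is precisely the asserted interval.

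The remaining — and hardest — task is the inductive step for the envelopes: assuming (A)--(D) at $t$, derive (C) and (D) at $t+1$. This is the envelope counterpart of the recursion estimate outlined in Section~\ref{sec:intuition} (there labelled Lemma~\ref{lem:recur-estimate}); it splits into (i) $L^{\rm env-U}_{t+1} < L^{\rm env-U}_t$, which yields (C) at $t+1$, and (ii) $L^{\rm env-L}_{t+1} \ge \frac{1+a_{t+1}^{-1}}{(1+r)+a_{t+1}^{-1}}$, both driven by (D) at $t$. Each is a one-step perturbation estimate for the maps $x \mapsto \frac{e^{z}x}{1-e^{2z}}$ and $x \mapsto \frac{e^{z}x}{1+e^{z}}$ under the coupled moves $a_t \mapsto a_t + \eta G_0(a_t,z_t)$ (an increase of size at most $\eta$, since $0<G_0<1$ under (C)) and $z_t \mapsto z_t + \eta G_r(a_t,z_t)$ (a decrease, which contracts $e^{z_t}$). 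I expect (i) to be the main obstacle: in $L^{\rm env-U}$ the denominator $1-e^{2z_t}$ only grows, but the numerator $e^{z_t}a_t$ can move in either direction, and in the relevant regime the numerator and denominator change by comparable small amounts, so one must expand both changes to leading order and exploit the lower bound on $a_t$ from (D) — together with $a_t \ge c$ for $c$ large, which renders the $a_t^{-1}$- and $e^{z_t}$-order remainders negligible — to certify a strict decrease. Step (ii) is the complementary ``cannot fall too fast'' bound, whose subtlety is that the target $g(s):=\frac{1+s}{(1+r)+s}$ is itself moving (it is increasing in $s$, hence relaxes as $a_t\uparrow$ and $a_t^{-1}\downarrow$) while $L^{\rm env-L}_t$ and $g(a_t^{-1})$ both approach $\frac{1}{1+r}$, so the one-step estimate has to be sharp enough to keep the former above the latter throughout. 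Once (i)--(ii) are in hand the induction closes and the lemma follows for all $t\ge t_0$.
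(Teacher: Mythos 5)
Your proof is correct and takes essentially the same route as the paper's: state the envelope invariants (which together are exactly Assumption~\ref{asmp:init-condition}), translate them via Lemma~\ref{lem:helper-function} into the two monotonicity claims and the bracket on $L_t$, and close the induction by appealing to Lemma~\ref{lem:recur-estimate}. The one cosmetic deviation is in the lower bound on $L_t$: you use the direct inequality $L^{\rm env-L}_t \le L_t$ (which, as you note, reduces to $\sigma(a_t)\le 1$, and is the same observation the paper records in Section~\ref{sec:intuition}), whereas the paper re-applies the first bullet of Lemma~\ref{lem:helper-function} with $\delta = r/(1+a_t^{-1})$ to conclude $G_{r/(1+a_t^{-1})}(a_t,z_t)<0$; both give the asserted bracket.
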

\begin{proof}[Proof of Lemma~\ref{lem:key-limit}]
	The proof relies on induction. By assumption at $t=t_0$, we know
	\begin{align*}
		\frac{e^{a_t+b_t} a_t}{1 - e^{2(a_t+b_t)}} < 1 \;,~ \frac{e^{a_t+b_t} a_t}{1+ e^{a_t+b_t}} &\geq \frac{1+\frac{1}{a_t}}{ 1+r + \frac{1}{a_t}} \;.
	\end{align*}
	Therefore there exists some fixed small $\epsilon > 0$ that satisfies
	\begin{align}
		\frac{e^{a_t+b_t} a_t}{1 - e^{2(a_t+b_t)}} \leq \frac{1}{1+\epsilon} \;,~ \frac{e^{a_t+b_t} a_t}{1+ e^{a_t+b_t}} &\geq \frac{1+\frac{1}{a_t}}{ 1+r + \frac{1}{a_t}} \;. \label{eqn:eps-room}
	\end{align} 
	Apply Lemma~\ref{lem:helper-function}, we know $G_{0}(a_t, a_t+b_t) > G_{\epsilon}(a_t, a_t+b_t) \geq 0$, and $G_{\frac{r}{1+1/a_t}} (a_t, a_t+b_t) <0$, which implies
	\begin{align*}
		\frac{\sigma'(a_t+b_t) a_t}{\sigma(a_t) - \sigma(a_t+b_t)} \in [ \frac{1+\frac{1}{a_t}}{1+r+\frac{1}{a_t}}, \frac{1}{1+\epsilon} ] \subset [ \frac{1}{1+r}, 1)\;, ~\text{for $t=t_0$} \;.
	\end{align*}
	Furthermore, monotonicity can be established for $t = t_0$,
	\begin{align}
		a_{t+1} &= a_{t} +  \eta G_{0}(a_t, a_t+b_t) > a_{t} >c \;, \label{eqn:an}\\
		a_{t+1} + b_{t+1}& = a_t + b_t + \eta G_{r} (a_t, a_t+b_t) < a_t + b_t + \eta G_{\frac{r}{1+1/a_t}} (a_t, a_t+b_t) < a_{t} + b_{t} \;. \label{eqn:anbn}
	\end{align}
		
	Now for the induction step from $t$ to $t+1$, we apply the Lemma~\ref{lem:recur-estimate}. The assumptions for Lemma~\ref{lem:recur-estimate} are verified, and therefore
	\begin{align*}
		\frac{e^{a_{t+1}+b_{t+1}} a_{t+1}}{1 - e^{2(a_{t+1}+b_{t+1})}} < \frac{e^{a_t+b_t} a_t}{1 - e^{2(a_t+b_t)}} \leq \frac{1}{1+\epsilon} \;, ~\frac{ e^{a_{t+1}+b_{t+1}} a_{t+1}}{1+ e^{a_{t+1}+b_{t+1}}} &> \frac{1+\frac{1}{a_{t+1}}}{ 1+r + \frac{1}{a_{t+1}}} \;. 
	\end{align*}
	Repeat the induction step we can complete the proof. 
\end{proof}

\begin{lemma}
	\label{lem:recur-estimate}
	 Fix $r>0$ and $\eta<1/2(2+r)$. Consider the one-step recursive relationship defined in Lemma~\ref{lem:nonlinear-recur}. Assume $(a_{t}, b_{t})$ satisfy the Assumption~\ref{asmp:init-condition} with $r>0$ and $c>1$.
	 Then we have $a_{t+1}+b_{t+1} < a_{t} + b_{t}$ and  $a_{t+1} > a_{t}$, and furthermore satisfy the following inequality
	\begin{align}
		\frac{e^{a_{t+1}+b_{t+1}} a_{t+1}}{1 - e^{2(a_{t+1}+b_{t+1})}} &< \frac{e^{a_t+b_t} a_t}{1 - e^{2(a_t+b_t)}} \;, \label{eqn:upp} \\
		\frac{ e^{a_{t+1}+b_{t+1}} a_{t+1}}{1+ e^{a_{t+1}+b_{t+1}}} &> \frac{1+\frac{1}{a_{t+1}}}{ 1+r + \frac{1}{a_{t+1}}} \;. \label{eqn:low} 
	\end{align}
	In other words, $(a_{t+1}, b_{t+1})$ satisfy Assumption~\ref{asmp:init-condition} as well.
\end{lemma}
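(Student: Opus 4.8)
The plan is to treat the two claimed inequalities \eqref{eqn:upp} and \eqref{eqn:low} separately, after first pinning down the one-step monotonicity $a_{t+1} > a_t$ and $a_{t+1}+b_{t+1} < a_t + b_t$. For the monotonicity, I would invoke Lemma~\ref{lem:helper-function} exactly as in the proof of Lemma~\ref{lem:key-limit}: from Assumption~\ref{asmp:init-condition} one extracts an $\epsilon>0$ with $L^{\rm env-U}_t \le \tfrac{1}{1+\epsilon}$ and $L^{\rm env-L}_t \ge \tfrac{1+1/a_t}{1+r+1/a_t}$, which via the first bullet of Lemma~\ref{lem:helper-function} gives $G_0(a_t,a_t+b_t) > G_\epsilon(a_t,a_t+b_t) \ge 0$ (hence $a_{t+1} = a_t + \eta G_0 > a_t$, and since $a_0>c$ we keep $a_{t+1}>0$), and via the second bullet gives $G_{r/(1+1/a_t)}(a_t,a_t+b_t) < 0$, hence $a_{t+1}+b_{t+1} = (a_t+b_t) + \eta G_r \le (a_t+b_t) + \eta G_{r/(1+1/a_t)} < a_t+b_t$. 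So both summary statistics move in the expected directions, and in particular $a_{t+1}+b_{t+1}<0$ and $a_{t+1}>c$.

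For the upper-envelope inequality \eqref{eqn:upp}, I would write $z_t := a_t+b_t < 0$ and note the map $z \mapsto \tfrac{e^z}{1-e^{2z}} = \tfrac{1}{e^{-z}-e^z}$ is strictly increasing in $z$ on $(-\infty,0)$, hence $\tfrac{e^{z_{t+1}}}{1-e^{2z_{t+1}}} < \tfrac{e^{z_t}}{1-e^{2z_t}}$ since $z_{t+1}<z_t$; but $a_{t+1}>a_t$ pushes the other way, so the inequality is genuinely quantitative. The natural route is to show the \emph{product} $\tfrac{e^{z}}{1-e^{2z}}\cdot a$ decreases: since $z_{t+1} = z_t + \eta G_r(a_t,z_t)$ with $G_r<0$ and $a_{t+1} = a_t + \eta G_0(a_t,z_t)$ with $G_0>0$, I would linearize to first order in $\eta$ (and control the $\eta^2$ remainder by the signs/monotonicity already established, or argue directly via the discrete ratio), reducing \eqref{eqn:upp} to an inequality of the form $\tfrac{G_0(a_t,z_t)}{a_t} < \phi'(z_t)\,(-G_r(a_t,z_t))$ for the appropriate $\phi$; the hypothesis $L^{\rm env-U}_t \le \tfrac{1}{1+\epsilon}$ (equivalently \eqref{eqn:assump1} with room) is exactly what is needed to close this, because $G_0 = a(\sigma'(z) - \text{something})$ and the slack $\epsilon$ absorbs the competing increase from $a_{t+1}>a_t$. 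For the lower-envelope inequality \eqref{eqn:low}, I would use the implication $L^{\rm env-L}_t > \tfrac{1+1/a_t}{1+r+1/a_t}$ stated in Section~\ref{sec:intuition}: since $\tfrac{1+1/a}{1+r+1/a}$ is \emph{increasing} in $a$ but only up to the ceiling $1$ far from it, and $a_{t+1}>a_t$, one must show the left side $\tfrac{e^{z}a}{1+e^z}$ grows at least as fast; again $z_{t+1}<z_t$ hurts while $a_{t+1}>a_t$ helps, and the precise form of \eqref{eqn:assump2} is calibrated so that the growth of $a_t$ (at rate $\eta G_0 = \Theta(\eta)$ since $\sigma(a_t)-\sigma(z_t)\to 1$) dominates the shrinkage of $e^{z_t}$ — the increments of $a_t^{-1}$ are $O(\eta/a_t^2)$, far smaller than needed, so the target threshold barely moves.

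The main obstacle I anticipate is \eqref{eqn:upp}: unlike $a_t$ and $z_t$ individually, the ratio $\tfrac{e^{z}a}{1-e^{2z}}$ is a product of one increasing and one decreasing factor, so one cannot argue by monotonicity alone and must carefully balance the two first-order effects, keeping the $\eta^2$ terms under control without a smallness-of-$\eta$ assumption (the statement fixes $\eta$). The trick will be to phrase everything in terms of $G_0$ and $G_r$ and exploit that $G_r = (1+r)\,[\text{the } \delta{=}\,r \text{ linear-in-}x \text{ term}] \dots$ shares structure with $G_0$, so that the ratio test collapses to exactly the quantity $L^{\rm env-U}_t$ bounded by $\tfrac{1}{1+\epsilon}$ in \eqref{eqn:eps-room}; the $\epsilon$-slack is what makes the one-step contraction strict rather than merely nonincreasing. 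Everything else — the monotonicity and the lower-envelope step — follows the template already laid out in the proofs of Lemmas~\ref{lem:helper-function} and \ref{lem:key-limit}, so I would reuse that machinery verbatim and concentrate the new work on the upper-envelope contraction.
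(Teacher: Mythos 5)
Your skeleton — establish one-step monotonicity of $a_t$ and $a_t+b_t$ via Lemma~\ref{lem:helper-function}, then prove the two envelope recursions — matches the paper, and the monotonicity part is handled the same way. The gaps are in how the two envelope inequalities are actually closed.

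For \eqref{eqn:upp}, you propose to linearize in $\eta$ and then worry, correctly, that the $\eta^2$ remainder is an obstruction since $\eta$ is fixed. The paper sidesteps this entirely with an exact chain: first replace the denominator $1-e^{2(a_{t+1}+b_{t+1})}$ by the \emph{larger} quantity $1-e^{2(a_t+b_t)}$ (legal because $a_{t+1}+b_{t+1}<a_t+b_t<0$), which freezes the denominator and removes the ``one factor up, one factor down'' tension you flagged as the main obstacle; then write $e^{a_{t+1}+b_{t+1}}=e^{a_t+b_t}e^{\eta G_r}$ and $a_{t+1}=a_t(1+\eta G_0/a_t)$, use $1+x\le e^x$ with no smallness assumption on $\eta$, and invoke the algebraic identity
\[
\eta G_r(a_t,z_t)+\eta\frac{G_0(a_t,z_t)}{a_t}=\eta\Bigl(1+\tfrac{1}{a_t}\Bigr)G_{\frac{r}{1+1/a_t}}(a_t,z_t),
\]
so strict decrease of $L^{\rm env\text{-}U}$ reduces to $G_{r/(1+1/a_t)}(a_t,a_t+b_t)<0$. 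You attribute the closing step to $L^{\rm env\text{-}U}_t\le\frac{1}{1+\epsilon}$ (i.e.\ \eqref{eqn:assump1}), but that condition is only used for $G_0>0$, i.e.\ monotonicity of $a_t$. The sign $G_{r/(1+1/a_t)}<0$ instead follows, via the first bullet of Lemma~\ref{lem:helper-function} with $\delta=r/(1+1/a_t)$, from the \emph{lower} envelope condition $L^{\rm env\text{-}L}_t\ge\frac{1+1/a_t}{1+r+1/a_t}$, i.e.\ \eqref{eqn:assump2}. So the dependence you state is on the wrong half of the hypothesis.

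For \eqref{eqn:low}, the argument you sketch (``growth of $a_t$ dominates shrinkage of $e^{z_t}$, threshold barely moves'') does not amount to a proof; it omits the real work. The paper introduces $\tilde r_t:=\sigma(a_t+b_t)a_t$ and $\underline{r_t}:=\frac{1+1/a_t}{1+r+1/a_t}$, applies $\log(1+z)>z/(1+z)$ and $e^z\ge 1+z$ (twice, carefully, with negative exponents), and arrives at a recursion of the form $\tilde r_{t+1}>\tilde r_t[1-\tilde\eta(\tilde r_t-\underline{r_{t+1}})]^2$; closing it additionally requires $a_t>c$ for $c$ large enough so that $e^{-a_t}$ is smaller than $\underline{r_t}-\underline{r_{t+1}}=\Theta(a_t^{-2})$ and that $\tilde\eta<1/2$. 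These quantitative steps are where the lemma actually lives, and your proposal does not supply them. In short: right outline, but the upper-envelope step misidentifies the controlling assumption and needs the exact (not linearized) manipulation, and the lower-envelope step is asserted rather than proved.
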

\begin{proof}[Proof of Lemma~\ref{lem:recur-estimate}]
	The proof consists of two parts: a recursive estimate for the upper bound to prove \eqref{eqn:upp} and then a recursive estimate for the low bound in \eqref{eqn:low}. First, due to \eqref{eqn:an} and \eqref{eqn:anbn}, we know $a_{t+1}+b_{t+1} < a_{t} + b_{t}$ and  $a_{t+1} > a_{t}$.
	
	\paragraph{Upper bound recursion}
	Now let's establish the recursion for the upper bound. Note
	\begin{align*}
		&\frac{e^{a_{t+1}+b_{t+1}} a_{t+1}}{1 - e^{2(a_{t+1}+b_{t+1})}} < \frac{e^{a_{t+1}+b_{t+1}} a_{t+1}}{1 - e^{2(a_{t}+b_{t})}}   & \because a_{t+1}+b_{t+1} < a_{t}+ b_{t} \\
		& = \frac{e^{a_t+b_t} a_t}{1 - e^{2(a_t+b_t)}} e^{ \eta G_{r} (a_t, a_t+b_t)  } (1+ \eta \frac{G_0(a_t, a_t+b_t)}{a_t}) \\
		& \leq   \frac{e^{a_t+b_t} a_t}{1 - e^{2(a_t+b_t)}} e^{ \eta G_{r} (a_t, a_t+b_t)  + \eta \frac{G_0(a_t, a_t+b_t)}{a_t} }  & \because 1+z \leq e^{z}  \\
		& =   \frac{e^{a_t+b_t} a_t}{1 - e^{2(a_t+b_t)}} e^{\eta (1+\frac{1}{a_t}) G_{\frac{r}{1+1/a_t}} (a_t, a_t+b_t) } & \text{by definition of $G_{\theta}(x, z)$} \\
		& <  \frac{e^{a_t+b_t} a_t}{1 - e^{2(a_t+b_t)}} < 1 & \because G_{\frac{r}{1+1/a_t}} (a_t, a_t+b_t) < 0\;.
	\end{align*}
	In addition, we can prove that there exists a constant $\epsilon >0$ as in \eqref{eqn:eps-room} such that $G_{\epsilon}(a_t, a_t+b_t) > 0$ and
	\begin{align*}
		a_{t+1} - a_{t} = \eta (\sigma(a_t) - \sigma(a_t+b_t)) [ 1 - \frac{\sigma'(a_t+b_t) a_t}{\sigma(a_t) - \sigma(a_t+b_t)} ] > \eta (\sigma(c) - \sigma(0)) \frac{\epsilon}{1+\epsilon} = \Omega(1).
	\end{align*}
	
	\paragraph{Lower bound recursion} Define 
	\begin{align*}
		\underline{r_{t}}:= \frac{1+\frac{1}{a_t}}{ 1+r + \frac{1}{a_t}}, ~\forall t \;.
	\end{align*}
	Define $\tilde{r}_{t} : = \sigma(a_t+b_t) a_t$, we are going to establish $\tilde{r}_{t+1} > \underline{r_{t+1}}$.
	
	Now let's establish the recursion for the lower bound. 
	\begin{align*}
		&\frac{ e^{a_{t+1}+b_{t+1}} a_{t+1}}{1+ e^{a_{t+1}+b_{t+1}}} > \frac{ e^{a_{t+1}+b_{t+1}} a_{t+1}}{1+ e^{a_{t}+b_{t}}} & \because a_{t+1}+b_{t+1} < a_{t}+ b_{t}\\
		&=\frac{e^{a_t+b_t} a_t}{1+ e^{a_{t}+b_{t}}}   e^{ \eta G_{r} (a_t, a_t+b_t)  } e^{\log(1+ \frac{a_{t+1} - a_t}{a_t})} \\
		& > \frac{e^{a_t+b_t} a_t}{1+ e^{a_{t}+b_{t}}}  e^{ \eta G_{r} (a_t, a_t+b_t)  } e^{ \eta \frac{G_0(a_t, a_t+b_t)}{a_{t+1}} }  &  \because \log(1+z) > \frac{z}{1+z} \\
		& =  \sigma(a_t+b_t) a_t \cdot e^{\eta \big( - (1+r+\frac{1}{a_{t+1}}) \sigma'(a_t+b_t) a_t + (1+\frac{1}{a_{t+1}}) [\sigma(a_t) - \sigma(a_t+b_t) ] ) \big)} \;.
	\end{align*}
	Recall the definition of $\tilde{r}_{t} : = \sigma(a_t+b_t) a_t$, we continue
	\begin{align*}
		& =  \sigma(a_t+b_t) a_t \cdot e^{\eta \big( - (1+r+\frac{1}{a_{t+1}}) \sigma'(a_t+b_t) a_t + (1+\frac{1}{a_{t+1}}) [\sigma(a_t) - \sigma(a_t+b_t) ] ) \big)} \\
		& =  \tilde{r}_{t} \cdot e^{\eta \big( - (1+r+\frac{1}{a_{t+1}}) \sigma'(a_t+b_t) a_t + (1+\frac{1}{a_{t+1}}) [1 - \sigma(a_t+b_t) ] ) \big)} e^{-\eta (1+ \frac{1}{a_{t+1}}) (1-\sigma(a_t))}  \\
		& = \tilde{r}_{t} \cdot e^{\eta [1 - \sigma(a_t+b_t) ] \big( - (1+r+\frac{1}{a_{t+1}}) \sigma(a_t+b_t) a_t + (1+\frac{1}{a_{t+1}})  ) \big)} e^{-\eta (1+ \frac{1}{a_{t+1}}) (1-\sigma(a_t))}  \\
		& > \tilde{r}_{t} \cdot \left\{ 1 -  \eta[1-\sigma(a_t+b_t)] (1+r+\frac{1}{a_{t+1}})  \big(  \tilde{r}_{t}  - \frac{1+\frac{1}{a_{t+1}}}{1+r+\frac{1}{a_{t+1}}}\big)  \right\} e^{-\eta (1+ \frac{1}{a_{t+1}}) (1-\sigma(a_t))}  \;.
	\end{align*}
	Here the last two lines follow from the facts $\sigma'(z) = \sigma(z) (1-\sigma(z))$ and $e^z \geq 1+z$.
	We first control the last term on the RHS of the above equation,
	\begin{align*}
		e^{-\eta (1+ \frac{1}{a_{t+1}}) (1-\sigma(a_t))} \geq 1 - \eta (1+ \frac{1}{a_{t+1}}) (1-\sigma(a_t)) >1 - \eta \frac{1+ c^{-1}}{1+e^{a_t}} > 1 - \eta (1+ c^{-1}) e^{-a_t} \;.
	\end{align*}
	Define $\tilde{\eta}  := \eta(1+r+c^{-1}) > \eta[1-\sigma(a_t+b_t)] (1+r+\frac{1}{a_{t+1}}) $, we continue with the bound
	\begin{align}
		\tilde{r}_{t+1} > \tilde{r}_{t} \left[ 1 - \tilde{\eta}  \big(  \tilde{r}_{t}  - \underline{r_{t+1}}\big)  \right] \left[ 1 - \eta (1+ c^{-1}) e^{-a_t} \right] \label{eqn:r-lower-bound} \;.
	\end{align}
	For $a_t > c$ some absolute constant, we have $e^{-a_t}<\underline{r_t} - \underline{r_{t+1}}$, as the RHS $ \underline{r_t} - \underline{r_{t+1}} $ is of order $(a_{t+1} - a_{t})a_t^{-2} = \Theta(a_t^{-2}) $ yet the LHS is exponential in $a_t$. Therefore $\eta(1+ c^{-1}) e^{-a_t} < \tilde{\eta}e^{-a_t} < \tilde{\eta}\big( \underline{r_t} - \underline{r_{t+1}} \big) <  \tilde{\eta}  \big(  \tilde{r}_{t}  - \underline{r_{t+1}}\big) $, and thus \eqref{eqn:r-lower-bound} reads $$\tilde{r}_{t+1} > \tilde{r}_{t} \left[ 1 - \tilde{\eta}  \big(  \tilde{r}_{t}  - \underline{r_{t+1}}\big)  \right] ^2 \;.$$
	One can subtract $\underline{r_{t+1}}$ on both sides, and see
	\begin{align*}
		\tilde{r}_{t+1} - \underline{r_{t+1}} >  \tilde{r}_{t} \left[ 1 - \tilde{\eta}  \big(  \tilde{r}_{t}  - \underline{r_{t+1}}\big)  \right]^2 -  \underline{r_{t+1}} \geq (\tilde{r}_{t}  -  \underline{r_{t+1}})[ 1- 2\tilde{\eta} \tilde{r}_{t} +\tilde{\eta}^2 \tilde{r}_{t} (\tilde{r}_{t}  -  \underline{r_{t+1}}) ]  \;.
	\end{align*}
	The lower bound on Eqn.~\eqref{eqn:low} is true as long as the RHS of the above is positive
	\begin{align*}
		(\tilde{r}_{t}  -  \underline{r_{t+1}})[ 1- 2\tilde{\eta} \tilde{r}_{t} +\tilde{\eta}^2 \tilde{r}_{t} (\tilde{r}_{t}  -  \underline{r_{t+1}})] > 0 \;.
	\end{align*}
	Note $\tilde{\eta} = \eta(1+r+c^{-1}) < \eta (2+r)<1/2$, $\tilde{r}_t < 1$ and $\tilde{r}_t > \underline{r_{t}} > \underline{r_{t+1}}$, we have concluded
	\begin{align*}
		\frac{ e^{a_{t+1}+b_{t+1}} a_{t+1}}{1+ e^{a_{t+1}+b_{t+1}}}=\tilde{r}_{t+1} >\underline{r_{t+1}} = \frac{1+\frac{1}{a_{t+1}}}{ 1+r + \frac{1}{a_{t+1}}}  \;.
	\end{align*}
\end{proof}

\subsection{Proofs in Section~\ref{sec:impact-on-learner}}
\begin{proof}[Proof of Theorem~\ref{thm:learner-reaction-regression}]
	Let $y^i = \langle x^i, \theta^\star \rangle + \epsilon^i$ with $\epsilon^i$ i.i.d. Gaussian, then
	 \begin{align*}
	 	\theta^{(1)} &= \theta^{(0)} - \eta \cdot \frac{1}{n} \sum_{i=1}^n \pdv{\theta} \ell(\langle x^i, \theta \rangle, y^i)|_{\theta = \theta^{(0)}} \;,\\
		\theta^{(1)} - \theta^\star &= \left(I - 2\eta \frac{1}{n} \sum_{i=1}^n x^i \otimes x^i \right) (\theta^{(0)} - \theta^\star) + 2\eta  \frac{1}{n}  \sum_{i=1}^n \epsilon^i x^i \;.
	 \end{align*}
	 Fix $n$, first let $T \rightarrow \infty$. Theorem~\ref{thm:regression-main} shows that $x^i_{T} \rightarrow \Delta_{\mathrm{b}}$ (we denote the dependence on $T$ explicitly) in the following sense
	 \begin{align*}
	 	\langle x^i_{T}, \Delta_{\mathrm{b}} \rangle^2 \geq 1 - O(e^{-cT}) \;.
	 \end{align*}
	 Choose $\eta = 1/2$, we have
	 \begin{align*}
	 	\| \theta^{(1)} - \theta^\star \| \leq \| \underbrace{(I -  \frac{1}{n} \sum_{i=1}^n x^i \otimes x^i ) (\theta^{(0)} - \theta^\star)}_{:= \Gamma} \| + \| \frac{1}{n}  \sum_{i=1}^n \epsilon^i x^i \| \;.
	 \end{align*}
	 
	 For the first term $\Gamma$, note $\Gamma = \langle \Delta_{\mathrm{b}}, \Gamma \rangle \Delta_{\mathrm{b}} +  \Pi^{\perp}_{\Delta_{\mathrm{b}}} \Gamma $. Along the direction $\Delta_{\mathrm{b}}$,
	 \begin{align*}
	 	\langle \Delta_{\mathrm{b}}, \Gamma \rangle &= \| \theta^{(0)} - \theta^\star \| \cdot (1 - \frac{1}{n} \sum_{i=1}^n \langle x^i, \Delta_{\mathrm{b}} \rangle^2 ) \;.
	 \end{align*}
	 Projecting to the orthogonal complement to $\Delta_{\mathrm{b}}$, we know
	 \begin{align*}
	 	\| \Pi^{\perp}_{\Delta_{\mathrm{b}}} \Gamma \| \leq   \frac{1}{n} \sum_{i=1}^n  \langle x^i, \Delta_{\mathrm{b}} \rangle \| \Pi^{\perp}_{\Delta_{\mathrm{b}}} x_i  \| \leq  \frac{1}{n} \sum_{i=1}^n  \langle x^i, \Delta_{\mathrm{b}} \rangle \sqrt{1 - \langle x^i, \Delta_{\mathrm{b}} \rangle^2} \;.
	 \end{align*}
	 Therefore
	 \begin{align*}
	 	\lim_{T\rightarrow \infty}~ \| \Gamma \| \leq \lim_{T\rightarrow \infty}~ \left\{\| \theta^{(0)} - \theta^\star \| \cdot (1 - \frac{1}{n} \sum_{i=1}^n \langle x^i_T, \Delta_{\mathrm{b}} \rangle^2 )  + \frac{1}{n} \sum_{i=1}^n  \langle x^i_T, \Delta_{\mathrm{b}} \rangle \sqrt{1 - \langle x^i_T, \Delta_{\mathrm{b}} \rangle^2} \right\}  = 0 \;.
	 \end{align*}
	 For the second term, we apply the Hanson--Wright inequality and recall $\|x^i\|=1$ to reach
	 {\small
	 \begin{align*}
	 	\left\| \frac{1}{n}  \sum_{i=1}^n \epsilon^i x^i \right\| \leq \sqrt{C_1 \cdot \frac{ \mathrm{Tr}( \tfrac{\sum_{i=1}^n x^i \otimes x^i}{n}) (1+\log^{0.5}(1/\delta))  +  C_2\| \tfrac{\sum_{i=1}^n x^i \otimes x^i}{n} \|_{\rm op}  \log (1/\delta)}{n}} = O(\sqrt{\frac{\log(1/\delta)}{n}}) 
	 \end{align*}
	 }
	 with probability at least $1-\delta$. 
\end{proof}

\begin{proof}[Proof of Theorem~\ref{thm:learner-reaction-classification}]
	In what follows, we consider fixed $n$, and let $T \rightarrow \infty$.
 \begin{align*}
 	\theta^{(1)} &= \theta^{(0)} - \eta \cdot \frac{1}{n} \sum_{i=1}^n \pdv{\theta} \ell(\langle x^i, \theta \rangle, y^i)|_{\theta = \theta^{(0)}} \;,\\
	\theta^{(1)} - \theta^\star &=  (\theta^{(0)} - \theta^\star) - \eta  \frac{1}{n}  \sum_{i=1}^n \big(\sigma(\langle x^i,  \theta^{(0)} \rangle)-y^i \big) x^i \;.
 \end{align*}
 Theorem~\ref{thm:regression-main} shows that $x^i_{T} \rightarrow \Delta_{\mathrm{c}}$, where the convergence means directional convergence, and we denote the dependence on $T$ explicitly. Then as $\lim_{T\rightarrow \infty} ~\langle \Delta_{\mathrm{c}}, x^{i}_{T} \rangle = 1$ and $\| x^{i}_{T} \| = 1$, we know, 
\begin{align*}
 	\lim_{T\rightarrow \infty} ~\langle \tfrac{\theta^\star}{\| \theta^\star\|}, x^i_{T} \rangle^2 = 1 - \lim_{T\rightarrow \infty} ~\langle \Delta_{\mathrm{c}}, x^i_{T} \rangle^2 = 0 \;,
\end{align*}
 and therefore
 \begin{align*}
 	\lim_{T\rightarrow \infty}~ \left| \langle \theta^\star - \theta^{(1)}_{n, T, \eta}, \tfrac{\theta^\star}{\| \theta^\star\|} \rangle - \langle \theta^\star - \theta^{(0)}, \tfrac{\theta^\star}{\| \theta^\star\|} \rangle \right| & \leq  \eta  \frac{1}{n}  \sum_{i=1}^n \lim_{T\rightarrow \infty} | \sigma(\langle x^i_{T},  \theta^{(0)} \rangle)-y^i_{T} \big|  \cdot \big| \langle \tfrac{\theta^\star}{\| \theta^\star\|} , x^i_{T} \rangle \big| \rightarrow 0\;.
 \end{align*}
 The final claim is a direct fact from
 \begin{align*}
 	\lim_{T\rightarrow \infty}~  \| \theta^\star - \theta^{(1)}_{n, T, \eta} \| \geq \left| \langle \theta^\star - \theta^{(1)}_{n, T, \eta}, \tfrac{\theta^\star}{\| \theta^\star\|} \rangle \right| = \left| \langle \theta^\star - \theta^{(0)}, \tfrac{\theta^\star}{\| \theta^\star \|} \rangle \right| \;.
 \end{align*}
 We conclude the proof by taking $\liminf$ w.r.t. to $n$.
\end{proof}

\end{document}